\documentclass{article}

\PassOptionsToPackage{numbers}{natbib}
\usepackage[preprint]{neurips_2026}
\usepackage{tcolorbox}
\usepackage{amsmath}

\usepackage[utf8]{inputenc} 
\usepackage[T1]{fontenc}    
\usepackage{hyperref}       
\usepackage{url}            
\usepackage{booktabs}       
\usepackage{amsmath}        
\usepackage{amsfonts}       
\usepackage{nicefrac}       
\usepackage{microtype}      
\usepackage{xcolor}         
\usepackage{graphicx}       
\usepackage{subcaption}     
\usepackage{float}          
\usepackage{wrapfig}        

\usepackage{multirow}
\usepackage{tcolorbox}
\tcbuselibrary{breakable,skins}
\usepackage{amsmath,amssymb,amsthm}
\tcbuselibrary{theorems,breakable,skins}

\newtheorem{remark}{Remark}

\title{Improving Sample Diversity in Autoregressive Text-to-Image Generation via Cluster Truncation}

\author{%
  Trang Nguyen$^{1,2}$, Shuang Wu$^2$, Runyan Tan$^2$, Phillip Howard$^2$ \\
  $^1$University of Massachusetts Amherst, $^2$Thoughtworks \\
  \texttt{tramnguyen@umass.edu, \{shuang.wu, runyan.tan, phillip.howard\}@thoughtworks.com}
}

\begin{document}

\maketitle

\begin{abstract}
While diffusion models achieve state-of-the-art image quality for text-to-image (T2I) generation, recent work has demonstrated that they suffer from sample diversity collapse.
In this work, we investigate whether autoregressive (AR) image generation models can push the Pareto frontier between image quality and sample diversity. 
With recent advances in quality and efficiency, AR models have emerged as a viable alternative to diffusion-based image generation. Beyond enabling new use cases such as interleaved image-text generation, their sequential generation process makes them compatible with a wide range of token-based decoding strategies originally developed to improve diversity in text generation. Motivated by the potential of a better diversity-quality tradeoff in the AR paradigm, we present the first systematic study of sample diversity in AR image generation models. We show that two key properties of AR image generation, persistently high token-level entropy and substantial redundancy in visual token spaces, limit the effectiveness of existing token-level decoding methods for diversity enhancement.
We therefore propose \textit{$p$-less cluster}, a new decoding strategy that performs entropy-based truncation sampling at cluster level rather than at token level. 
We evaluate our approach and baseline decoding methods across four autoregressive T2I models and two datasets using a comprehensive suite of metrics spanning image quality, prompt alignment, and diversity. 
Our results show that $p$-less cluster unlocks the greatest diversity across most evaluated autoregressive T2I models and datasets while maintaining image quality and prompt alignment. 

\end{abstract}

\section{Introduction}
\label{sec:intro}


Text-to-image (T2I) generation has witnessed remarkable progress in recent years, with diffusion-based models such as Flux\cite{flux2024}, Stable Diffusion \cite{stabilityai2024sd35}, and Imagen\cite{imagen} producing photorealistic outputs that closely follow complex text prompts.
Despite this progress, diversity collapse has remained a persistent limitation with T2I generation.
Diversity collapse manifests at two levels: \emph{distributional diversity} (the model ignores rare or underrepresented modes of the data distribution) and \emph{sample diversity} (repeated sampling for the same prompt yields little variation in composition, style, or content)~\citep{Gandikota_2026_WACV}.
Critically, these two axes of diversity are distinct and do not always improve together: for example, SDXL-DMD2~\citep{yin2024improved}, a distilled diffusion model, achieves superior distributional diversity as measured by FID~\citep{heusel2017gans} compared to the base model, yet exhibits markedly lower per-prompt sample diversity~\citep{Gandikota_2026_WACV}.
In this work, we focus on the latter: \textit{sample diversity}, i.e., generating a diverse set of images for the same prompt. This capability is particularly important in creative applications, where users are often presented with multiple generated candidates to choose from~\citep{parmar2026scaling, angelova2025integration}.

Recently, autoregressive (AR) image generation~\citep{sun2024llamagen,wang2024emu3,chern2024anole,chen2025janus} has emerged as a compelling alternative paradigm to diffusion-style T2I generation.
Such models have enabled new use cases such as interleaved image and text generation by unifying language and image generation in a single transformer-based architecture.
This has important implications for how sampling is conducted in T2I generation: by formulating image synthesis as sequential token prediction, AR models can leverage decoding strategies developed previously for language generation to promote diversity.

In the realm of text generation, a wide range of sampling approaches have been proposed which are directly applicable to to AR image generation models. Methods such as top-$p$ \citep{holtzmancurious}, top-$k$ \citep{fan2018hierarchical}, min-$p$ \citep{nguyen2024turning}, and $p$-less \cite{tan2026pless} enable greater diversity in language generation without significantly sacrificing output quality by truncating the token probability distribution to a subset of higher-probability tokens, thereby allowing higher temperature values to be applied during sampling. Despite the utility of these sampling strategies for extracting high-quality diverse outputs from LLMs, prior studies on autoregressive T2I have not studied how choice of truncation-based sampling methods impact the sample diversity of generated images. 

\begin{figure}[t]
    \centering
    \includegraphics[width=\linewidth]{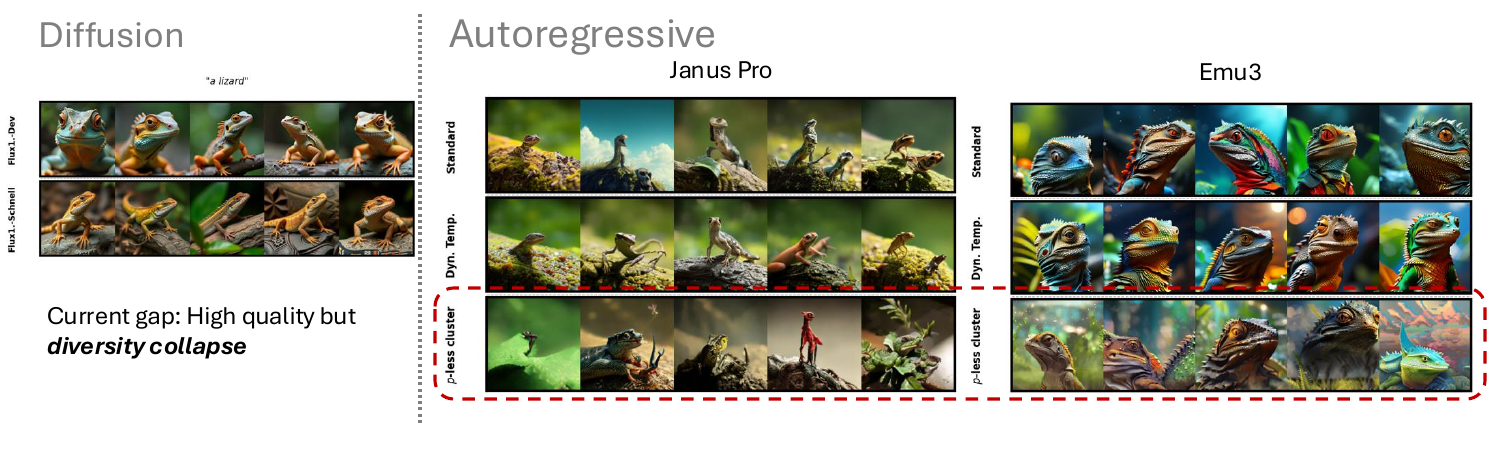}
    \caption{\textit{p-less cluster} enables \textbf{diverse i.i.d.\ sampling} across autoregressive (AR) image generative models while maintaining acceptable image quality, avoiding the issue of diversity collapse.}
    \label{fig:diversity-collapse}
\end{figure}


Our work aims to close this gap by addressing the question: \emph{Can decoding strategies developed for text generation unlock greater sample diversity in autoregressive image models, without significantly sacrificing image quality or prompt alignment?} 
Motivated by the high token-level entropy yet substantial redundancy of visual token spaces in AR image models, we propose \textit{$p$-less cluster}, a cluster-level truncation-based sampling method designed to improve sample diversity without significantly degrading image quality or prompt alignment. Instead of truncating over individual tokens, $p$-less cluster truncates over clusters of similar tokens, encouraging exploration across semantically distinct modes while filtering out low-probability regions that often lead to artifacts.

We conduct comprehensive experiments using four leading AR T2I generation models and benchmark against two diffusion-style T2I model over two prompt datasets. Through a suite of evaluation metrics spanning image quality, prompt alignment, and diversity, we demonstrate how \textit{p-less cluster} extracts greater image diversity across most models and datasets than baseline approaches for sampling. Additionally, we show how our method pushes the boundary of the pareto efficiency frontier characterizing the trade-off between image diversity and quality \cite{parmar2026scaling}. We further substantiate our results with human evaluations, qualitative analyses, and ablation studies on our approach. 

In summary, our contributions are as follows:

\begin{itemize}
    \item We conduct the first study of per-prompt sample diversity in token-wise autoregressive text-to-image models, spanning four models and three different vector-quantized codebook paradigms.
    
    \item We propose \textit{$p$-less cluster}, a cluster-aware decoding strategy that performs two-stage sampling with truncation over clusters of visual tokens.
    
    \item We conduct extensive experiments across four AR image generation models, showing that our method consistently improves the diversity--quality Pareto frontier over standard sampling baselines.
    
    \item Through quantitative and qualitative analyses, we validate the robustness of our findings and identify regimes where cluster-level sampling provides the largest diversity gains.
\end{itemize}

\section{Related work}
\label{sec:related}

\subsection{Diversity collapse in text-to-image generative models}
\label{sec:related:diversity}

A large body of work has focused on improving sample diversity in T2I generation, primarily within diffusion-based models. For example, \citet{parmar2026scaling} propose scaling group inference to generate diverse and high-quality samples, while \citet{wang2025diverse} introduce contrastive noise optimization to encourage diverse outputs during diffusion sampling. 
In contrast, improving diversity in autoregressive (AR) image generation remains underexplored. Existing efforts such as DiverseAR \citep{yang2025diversear} enhance diversity in bitwise or next-scale AR frameworks (e.g., Infinity \cite{han2025infinity}), but do not address next-token AR generation. To address this gap, our work focuses on this setting, aiming to improve diversity under the standard token-by-token autoregressive formulation.

\subsection{Autoregressive image generation}

Pioneering AR image generation operated in pixel space with convolutional  \cite{van2016conditional} or recurrent neural networks \cite{van2016pixel}. Lately, motivated by the success language models, current AR image generation models have adopted transformers and tokenization techniques, which can be broadly categorized into next-token, next-scale, and mask-based paradigms. Next-token prediction generates an image sequentially via discrete tokens, such as via vector quantized Variational Autoencoder \cite{razavi2019generating} or Generative Adversarial Networks \cite{yu2022vector}. \cite{sun2024llamagen} shows that this framework could achieve state-of-the-art performance. Next-scale prediction treats images as a series of multi-resolution maps, performing a coarse-to-fine generation \cite{tian2024visual,han2025infinity,liu2025infinitystar}, with the potential to better model global structural nuances during early stages. Mask-based AR models dispenses with the strict adherence to a predefined sequential generation order \cite{li2024autoregressive,yao2025denoising}. Our work focuses on the next-token paradigm since it is the most mature, benefiting from a vast ecosystem of established techniques in NLP, and serves as a diverse testbed for new methodological improvements \cite{mu2025editar}.

\subsection{Decoding strategies for autoregressive image generation}
\label{sec:related:decoding}

Prior research on decoding strategies for AR image generation has largely focused on improving inference efficiency rather than diversity. For instance, ZipAR \citep{he2024zipar} exploits spatial locality to enable parallel decoding and accelerate generation. Similarly, speculative decoding methods such as Speculative Jacobi Decoding \citep{teng2025speculative} leverage iterative refinement to speed up autoregressive sampling.
Beyond speed, \citet{ma2026towards} propose an entropy-based decoding strategy that adapts token generation order and temperature based on spatial entropy, motivated by the observation that images exhibit non-uniform information density across regions. While this approach provides a principled way to improve both efficiency and quality, it does not explicitly target sample diversity. One common approach to increase I.I.D sample diversity at inference in autoregressive models is to increase temperature applied to discrete token probability distribution. In this work, we focus on decoding strategies designed specifically to enhance diversity at higher temperature in AR image generation, and compare against \citet{ma2026towards} as a strong baseline.


\section{Methodology}
\label{sec:method}
\subsection{Preliminaries}
Let $p_\theta(x \mid c)$ denote an AR image generation model that generates an image $x$ conditioned on a text prompt $c$. Our goal is to design a decoding procedure $\mathcal{D}$ such that a set of $K$ images, sampled independently as
\[
x_1, \ldots, x_K \overset{\text{i.i.d.}}{\sim} \mathcal{D}(p_\theta, c),
\]
achieves higher set-level diversity while maintaining image quality and prompt alignment.
Importantly, we focus on i.i.d.\ sampling without introducing cross-sample coupling or sequential dependence, ensuring that all samples can be generated in parallel and that diversity improvements arise purely from the decoding strategy.

\begin{figure}[t]
  \centering
  \includegraphics[width=\linewidth]{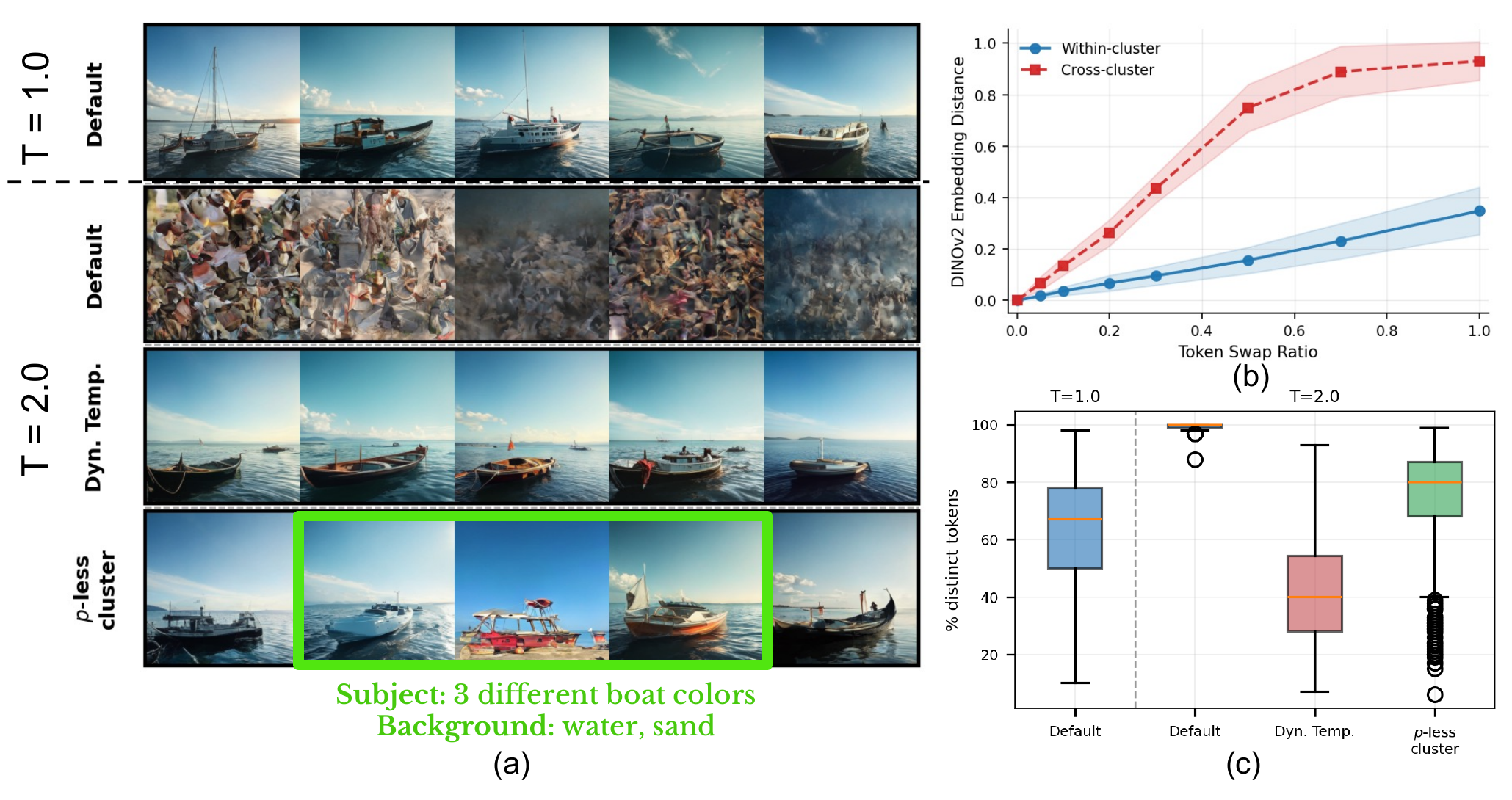}
  \caption{\textbf{Comparison of decoding strategies on image- and token-level diversity (Janus Pro).} (a) Visual comparison; (b) impact of within- vs.\ cross-cluster token swaps on image diversity; (c) percentage of distinct tokens across generation steps (100 samples). At high temperature, default sampling admits too many tokens, leading to degeneration. Dynamic temperature mitigates this by reducing the effective temperature, but limits diversity due to fewer distinct tokens. $p$-less cluster achieves a better tradeoff by combining cluster-level sampling with truncation.}
  \label{fig:motivation}
\end{figure}

\subsection{$p$-less cluster sampling}
We propose a cluster-aware decoding framework for AR image generation that consists of (1) constructing a partition of the token vocabulary, (2) performing cluster-level probability filtering using the $p$-less entropy-based truncation~\cite{pless}, and (3) sampling via a two-stage procedure. Step (1) is executed offline to ensure no substantial cost is incurred at inference time. 

\paragraph{Token Vocabulary Partitioning via K-Means.}
Let $V$ denote the size of the discrete image-token vocabulary. We partition the vocabulary into $N$ clusters $\{\mathcal{C}_n\}_{n=1}^N$ using unsupervised K-Means on codebook vectors (as described in Table~\ref{tab:codebook_analysis}). The resulting partition is computed once offline.

\paragraph{Cluster-Level $p$-less Filtering.}
At each decoding step, the model produces a token distribution $p \in \Delta^{V-1}$, where $\Delta^{V}$ denotes the $V$-dimensional simplex.. We aggregate token probabilities into cluster probabilities:
\[
  q_n = \sum_{v \in \mathcal{C}_n} p_v, \quad n = 1, \dots, N.
\]
We then apply $p$-less~\cite{pless} truncation to the cluster distribution, masking clusters with probability below
\[
  \hat{p}_k = \frac{1}{\exp^{H_k(q)}},
\]
where $H_k$ denotes the Rényi-$k$ entropy, followed by renormalization. This adaptively selects a subset of semantically coherent regions in the vocabulary.

\paragraph{Two-Stage Sampling.}
First, a cluster index $\hat{n}$ is drawn from the filtered cluster distribution. 
Second, a token is sampled from the original token-level probabilities restricted to the selected cluster:
\[
  \hat{v} \sim \frac{p_v}{\sum_{u \in \mathcal{C}_{\hat{n}}} p_u}, \quad v \in \mathcal{C}_{\hat{n}}.
\]
This design allows diversity control at the cluster level while preserving the model’s fine-grained preferences within each cluster.


\subsection{Intuition for cluster-level truncation sampling}
\label{sec:motivation}
Our approach is motivated by two key properties that distinguish visual generation from traditional AR text generation, which are summarized in the following two observations.

\textbf{Observation 1: High token-level entropy during generation.}
Compared to language models, AR image models maintain relatively high entropy across many decoding steps (as shown in Figure \ref{fig:text_vs_image_entropy_hist}).

\textbf{Observation 2: High redundancy in the token space.}
Despite this high entropy, Figure~\ref{fig:motivation}(b) demonstrates that randomly swapping all tokens with other tokens from the same cluster results in relatively small image-level changes, with DINOv2 embedding distance remaining below $0.4$. This indicates that diversity at the discrete token level does not necessarily translate into meaningful diversity at the image level. In contrast, swapping tokens across clusters produces substantially greater image diversity, suggesting that tokens \emph{across} different clusters are more semantically divergent.

Together, these observations explain why default sampling exhibits diversity collapse despite the high entropy of AR image models (Figure~\ref{fig:motivation}(a)). Although many tokens are admitted during sampling, much of this variation occurs within semantically redundant regions of the token space, leading to visually similar outputs.
\citet{ma2026towards} is the first to investigate non-default temperature schedules for AR image generation, proposing a decoding strategy that dynamically decays temperature based on token-level entropy. While entropy-based temperature adjustment has shown strong diversity improvements in text generation~\citep{zhang2403edt}, it is less effective in the visual domain due to the persistently high token-level entropy of AR image models (Observation~1). In practice, high entropy causes the method to reduce the effective temperature during generation, which mitigates artifacts but simultaneously collapses diversity by restricting the effective candidate set (as shown in Figure~\ref{fig:motivation}(c)).

Cluster-level truncation effectively mitigates this limitation.
Leveraging insights from Observation 2, our method operates over clusters of semantically similar tokens to reduce redundancy in the candidate space. By first sampling at the cluster level, $p$-less cluster enables the model to preserve multiple meaningful modes of the distribution rather than collapsing to a set of similar-looking tokens.
However, sampling at high temperature might admits too many clusters which can lead to degeneration, similar to default sampling (Figure~\ref{fig:motivation}(a)(c)). To address this, we combine cluster-level sampling with the $p$-less entropy-based truncation~\cite{pless}, which filters out undesirable clusters based on cluster-level entropy while retaining a diverse set of plausible modes. 

This design yields a favorable balance between diversity and quality, as illustrated in Figure~\ref{fig:motivation}(a). By explicitly encouraging sampling across clusters while controlling tail mass, our method increases the number of distinct tokens (Figure~\ref{fig:motivation}(c)) and promotes cross-cluster exploration without sacrificing image fidelity. Consequently, it achieves higher image-level diversity compared to token-level methods such as dynamic temperature, which reduces diversity by 
overly suppressing non-dominant tokens 
under high token-level entropy.

\subsection{Key design choices}
\label{sec:key_design_choice}
\paragraph{Offline vs. online clustering.} We adopt offline clustering rather than performing clustering at inference time. While online clustering could in principle adapt to the local token distribution at each decoding step, it introduces substantial computational overhead. As inference is already latency-sensitive in AR generation, performing clustering per step would be time cost prohibitive. 

\paragraph{$p$-less for truncation.} Although our cluster-based framework is compatible with a range of truncation strategies, we adopt $p$-less due to its strong empirical performance in high-temperature text generation~\cite{pless}. Unlike fixed-threshold methods such as top-$k$ or top-$p$, $p$-less adapts the truncation threshold based on entropy, making it well-suited to both token-level and cluster-level distributions. This adaptive behavior is particularly important in high-entropy regimes typical of visual generation. 

\paragraph{Hyperparameters.} Besides temperature, which is common to all decoding strategies, our method introduces two additional hyperparameters: (i) $k$, the order of the Rényi entropy used for cluster-level probability truncation, and (ii) $n$, the number of clusters. To ensure practical usability and fair comparison with baselines, we restrict the search space to a small, fixed set: $k \in \{1.5, 2.0\}$ and $n \in \{500, 1000\}$. We find this range sufficient to achieve competitive performance across models.

\section{Experiment setup}
\label{sec:experiments}

\subsection{Base models.}
We apply our decoding strategy to a diverse set of recent AR text-to-image models:
\textbf{JanusPro-7B}~\citep{chen2025janus},
\textbf{LLaMAGen (Stage 1)}~\citep{sun2024llamagen},
\textbf{Emu3}~\citep{wang2024emu3}, and 
\textbf{Anole}~\citep{chern2024anole}.
These models vary in architecture, tokenizer design, and decoding defaults, providing a robust testbed for evaluating decoding strategies across heterogeneous AR systems.
Table~\ref{tab:codebook_analysis} summarizes the visual codebook statistics for each model.

\subsection{Datasets.}
We use two complementary text-to-image benchmarks for evaluation:
\textbf{PartiPrompts}~\citep{yu2022scaling}, a suite of prompts spanning a wide range of semantic categories and difficulty levels, and
\textbf{GenEval}~\citep{ghosh2023geneval}, which focuses on compositional reasoning with object-centric prompts. We select these datasets to cover both abstract and concrete prompt regimes. 
PartiPrompts emphasizes open-ended and imaginative descriptions (e.g. \textit{a smiling banana wearing a bandana}), while GenEval targets precise compositional alignment (e.g., \textit{a photo of a traffic light next to a person}).

For GenEval, we use all $553$ prompts covering six categories (single-object, two-object, counting, colors, color-attribute binding, and compositional relations).
For PartiPrompts, we use all $535$ prompts from the \textit{Basic}, \textit{Complex}, and \textit{Challenge} categories.
For each prompt, we generate a set of $5$ images using the model-specific default classifier-free guidance (CFG) scale: $3.0$ for Anole, $5.0$ for Emu3, $5.0$ for Janus Pro-7B, and $7.5$ for LLaMAGen (unless stated otherwise).

\subsection{Evaluation metrics.}
We adopt established automatic metrics following \citet{parmar2026scaling} while also supplementing them with a VLM-as-judge protocol we developed. This was done to to address known limitations of feature-based scores \cite{hu2023tifa,cho2023davidsonian} such as CLIPScore~\citep{ramesh2022clipscore}, which is widely used to measure both image fidelity and prompt alignment. Despite its popularity, we empirically observed that CLIPScore poorly discriminates between severely distorted images and those with minor artifacts. To better capture perceptual quality and semantic correctness and to provide an interpretable, human-aligned evaluation, we therefore use a vision-language model (InternVL3-14B~\cite{internvl}) to assign scores on a 1--5 scale to each image. All scores are averaged over three independent runs to reduce evaluation variance. See Appendix~\ref{sec:appendix_vlm_judge} for complete details.

Overall, we evaluate performance along three axes: \textit{image quality}, \textit{prompt alignment}, and \textit{sample diversity}, where improving \textit{sample diversity} is the main focus of our work. Image quality and prompt alignment are primarily used as quality constraints to ensure that diversity gains do not come at the expense of unacceptably poor image quality. Following \cite{parmar2026scaling}, we compute standard errors using bootstrapping with $1000$ resamples. 

\textbf{Image quality.} We measure image quality using VLM-as-judge scores (1--5 scale), where the model evaluates perceptual realism and absence of artifacts. 

\textbf{Prompt alignment.} We use (i) VLM-as-judge scores (1--5 scale), and (ii) CLIPScore~\citep{ramesh2022clipscore}.

\textbf{Sample diversity.} We use two metrics computed over a set of 5 generated images per prompt:
(i) DINOv2 embedding distance~\citep{oquab2023dinov2}, and
(ii) Vendi Score \cite{friedman2022vendi}.

\subsection{Baselines}
We compare our method against the following baselines (see complete hyperparameter settings in Appendix~\ref{sec:appendix_impl}): 
\textbf{(1) Default decoding.}
We adopt each model’s standard decoding strategy: multinomial sampling for JanusPro-7B and LLaMAGen, top-$p$ sampling ($p=0.9$) for Anole, and top-$k$ ($k=2048$) for Emu3.
\textbf{(2) Dynamic temperature decoding.}
We include entropy-based dynamic temperature adjustment~\cite{ma2026towards} as a strong baseline. This method modulates the sampling temperature based on token-level entropy, prioritizing regions with higher spatial information density.
\textbf{(3) Diffusion model baselines.}
We additionally evaluate the state-of-the-art Flux.1-Dev and Flux.1-Schnell models \cite{flux2024} using default settings: 50 steps with $\mathrm{CFG} = 3.5$ for Flux.1-Dev, and 4 steps for Flux.1-Schnell. As our work focuses on AR model diversity, we do not compare against specific diffusion-based diversity enhancement techniques.

\section{Results}
\label{sec:results}
\begin{table*}[ht]
\centering
\caption{\textbf{Diversity metrics on GenEval and PartiPrompts at default CFG}. SEs are computed via bootstrapping with 1000 resamples.}
\label{tab:main_results}
\resizebox{0.75\textwidth}{!}{
\begin{tabular}{p{0.5cm} p{1.4cm} p{3.1cm} r r r r}
\toprule
\multirow{2}{*}{Type} & \multirow{2}{*}{Model} & \multirow{2}{*}{Decoding Strategy} & \multicolumn{2}{c}{GenEval} & \multicolumn{2}{c}{PartiPrompts} \\
\cmidrule(lr){4-5}\cmidrule(lr){6-7}
 & & & DINOv2 & Vendi & DINOv2 & Vendi \\
\midrule
\multirow{2}{*}{\rotatebox{90}{Diffusion}} & Flux.1-Dev & Default & 0.311{\scriptsize $\pm$0.016} & 2.336{\scriptsize $\pm$0.069} & 0.354{\scriptsize $\pm$0.013} & 2.514{\scriptsize $\pm$0.056} \\
 & Flux.1-Schnell & Default & 0.270{\scriptsize $\pm$0.006} & 2.186{\scriptsize $\pm$0.024} & 0.277{\scriptsize $\pm$0.008} & 2.227{\scriptsize $\pm$0.033} \\
\midrule
\multirow{12}{*}{\rotatebox{90}{Autoregressive}} & \multirow{3}{*}{Anole} & Default & 0.548{\scriptsize $\pm$0.006} & 3.494{\scriptsize $\pm$0.027} & 0.535{\scriptsize $\pm$0.005} & 3.401{\scriptsize $\pm$0.024} \\
 & & Dynamic Temperature & 0.505{\scriptsize $\pm$0.007} & 3.247{\scriptsize $\pm$0.032} & 0.531{\scriptsize $\pm$0.008} & 3.353{\scriptsize $\pm$0.037} \\
 & & $p$-less cluster (Ours) & \textbf{0.563}{\scriptsize $\pm$0.006} & \textbf{3.549}{\scriptsize $\pm$0.025} & \textbf{0.542}{\scriptsize $\pm$0.007} & \textbf{3.466}{\scriptsize $\pm$0.030} \\
\cmidrule(lr){2-7}
 & \multirow{3}{*}{Emu3} & Default & \textbf{0.441}{\scriptsize $\pm$0.007} & \textbf{2.988}{\scriptsize $\pm$0.030} & 0.364{\scriptsize $\pm$0.007} & 2.618{\scriptsize $\pm$0.032} \\
 & & Dynamic Temperature & 0.405{\scriptsize $\pm$0.007} & 2.809{\scriptsize $\pm$0.029} & 0.360{\scriptsize $\pm$0.007} & 2.611{\scriptsize $\pm$0.031} \\
 & & $p$-less cluster (Ours) & \textbf{0.441}{\scriptsize $\pm$0.007} & 2.972{\scriptsize $\pm$0.031} & \textbf{0.400}{\scriptsize $\pm$0.007} & \textbf{2.794}{\scriptsize $\pm$0.031} \\
\cmidrule(lr){2-7}
 & \multirow{3}{*}{Janus Pro} & Default & \textbf{0.374}{\scriptsize $\pm$0.006} & \textbf{2.684}{\scriptsize $\pm$0.027} & 0.426{\scriptsize $\pm$0.007} & 2.947{\scriptsize $\pm$0.031} \\
 & & Dynamic Temperature & 0.222{\scriptsize $\pm$0.005} & 1.994{\scriptsize $\pm$0.023} & 0.375{\scriptsize $\pm$0.007} & 2.707{\scriptsize $\pm$0.033} \\
 & & $p$-less cluster (Ours) & 0.328{\scriptsize $\pm$0.006} & 2.480{\scriptsize $\pm$0.026} & \textbf{0.454}{\scriptsize $\pm$0.007} & \textbf{3.080}{\scriptsize $\pm$0.031} \\
\cmidrule(lr){2-7}
 & \multirow{3}{*}{LlamaGen} & Default & 0.526{\scriptsize $\pm$0.005} & 3.424{\scriptsize $\pm$0.026} & 0.474{\scriptsize $\pm$0.006} & 3.192{\scriptsize $\pm$0.027} \\
 & & Dynamic Temperature & 0.541{\scriptsize $\pm$0.006} & 3.491{\scriptsize $\pm$0.026} & 0.471{\scriptsize $\pm$0.006} & 3.164{\scriptsize $\pm$0.030} \\
 & & $p$-less cluster (Ours) & \textbf{0.543}{\scriptsize $\pm$0.005} & \textbf{3.511}{\scriptsize $\pm$0.024} & \textbf{0.479}{\scriptsize $\pm$0.006} & \textbf{3.213}{\scriptsize $\pm$0.026} \\
\bottomrule
\end{tabular}
}
\end{table*}
\begin{table*}[h]
\centering
\caption{\textbf{Diversity metrics for Janus Pro-7B at extremely low CFG (CFG$=$1.0).} Bold denotes best mean. $p$-less cluster achieves substantially higher diversity compared to baseline decoding strategies.}
\label{tab:low_cfg}
\resizebox{0.75\textwidth}{!}{
\begin{tabular}{p{1.8cm} p{3.3cm} r r r r}
\toprule
\multirow{2}{*}{Model} & \multirow{2}{*}{Decoding Strategy} & \multicolumn{2}{c}{GenEval} & \multicolumn{2}{c}{PartiPrompts} \\
\cmidrule(lr){3-4}\cmidrule(lr){5-6}
 & & DINOv2 & Vendi & DINOv2 & Vendi \\
\midrule
\multirow{3}{*}{Janus Pro} & Default & 0.442{\scriptsize $\pm$0.008} & 2.977{\scriptsize $\pm$0.034} & 0.553{\scriptsize $\pm$0.010} & 3.489{\scriptsize $\pm$0.045} \\
 & Dynamic Temperature & 0.426{\scriptsize $\pm$0.016} & 2.912{\scriptsize $\pm$0.069} & 0.476{\scriptsize $\pm$0.013} & 3.142{\scriptsize $\pm$0.059} \\
 & $p$-less cluster (Ours) & \textbf{0.552}{\scriptsize $\pm$0.006} & \textbf{3.506}{\scriptsize $\pm$0.028} & \textbf{0.589}{\scriptsize $\pm$0.010} & \textbf{3.646}{\scriptsize $\pm$0.042} \\
\bottomrule
\end{tabular}
}
\end{table*}

\subsection{Main results}
\label{sec:main_results}
\paragraph{AR models achieve higher diversity than diffusion baselines at default CFG.} As shown in Table~\ref{tab:main_results}, all four AR models substantially outperform diffusion baselines on both diversity metrics across GenEval and PartiPrompts under default settings. Relative improvements in DINOv2 range from 72\% to 106\%. This suggests that AR models possess a stronger inherent capacity for diversity, likely due to their token-wise generation process.
Within the diffusion baselines, we observe that Flux.1-Schnell consistently underperforms Flux.1-Dev on both diversity metrics. This is consistent with prior findings that distilled diffusion models trade off diversity for efficiency~\citep{Gandikota_2026_WACV}.

\paragraph{$p$-less cluster sampling improves diversity across AR models while maintaining quality and alignment.}
The results in Table~\ref{tab:main_results} (and Tables \ref{tab:geneval}, \ref{tab:parti}) illustrate how $p$-less cluster sampling consistently achieves the highest diversity while maintaining acceptable quality and alignment on both GenEval and PartiPrompts for Anole and LLaMAGen. For Emu3 and Janus, the gains are more mixed with clear advantage in PartiPrompts but underperform Default sampling in GenEval dataset. Additional analysis on performance gain across prompt categories in Partiprompts reveals that $p$-less cluster shows more advantage in complex and imagination prompts compared to basic simple ones. Full results including quality and alignment metrics are provided in Appendix~\ref{sec:appendix_full_tables}. Though $p$-less cluster observes a decrease in image quality and prompt alignment compared to Default and Dynamic Temperature, it maintains acceptable quality and alignment scores ($> 3.0$) across models and datasets. 

\paragraph{$p$-less cluster increases the diversity gap at low CFG for Janus Pro.}
We observed that the maximum diversity across different decoding methods is below $0.5$ for DINOv2 at default CFG for Janus and Emu3. Therefore, we further investigated how these two models perform in the low-CFG setting. For Emu3, none of the decoding strategies achieves an average VLM alignment score above $3.0$ across all temperatures in the low-CFG regime, indicating severe quality degradation. Table~\ref{tab:low_cfg} provides results for Janus Pro, which did meet the quality requirement in this low-CFG setting. We observe that $p$-less cluster outperforms the baseline methods by an even larger margin in this setting.

\paragraph{$p$-less cluster sampling is most effective for complex, open-ended prompts.}
\begin{figure}[ht]
  \centering
  \begin{subfigure}[t]{0.48\linewidth}
    \includegraphics[width=\linewidth]{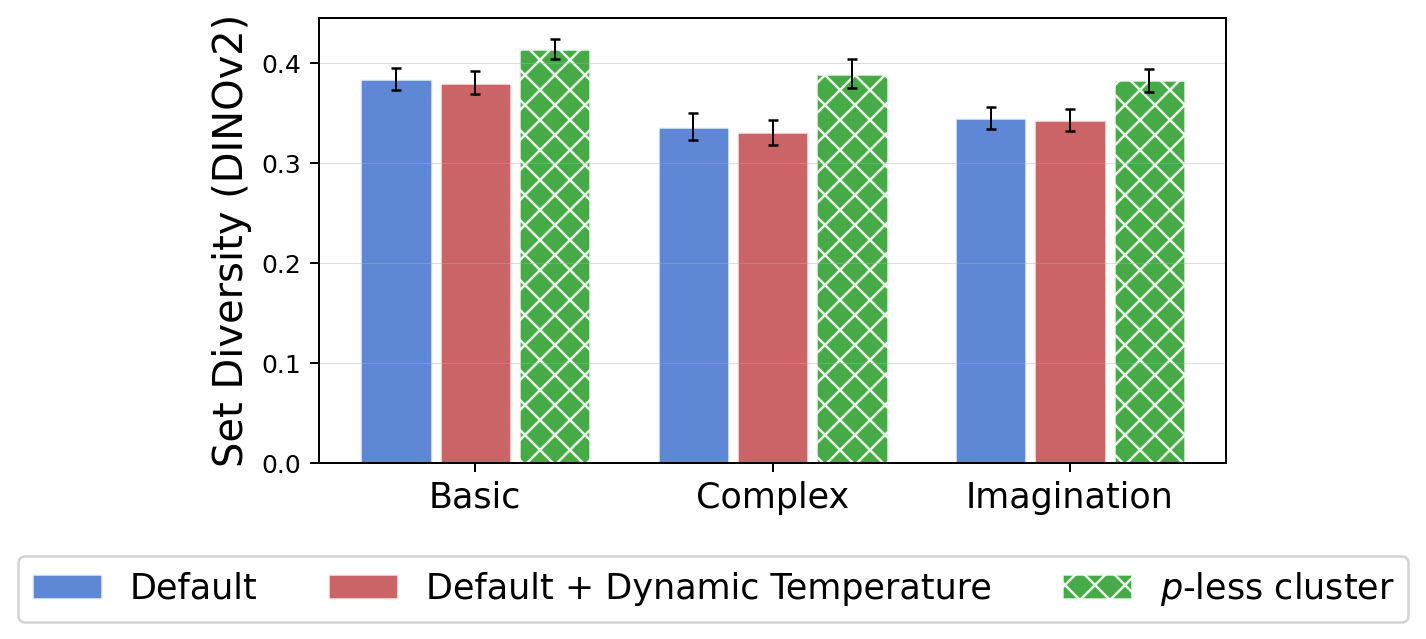}
    \caption{Emu3}
  \end{subfigure}
  \hfill
  \begin{subfigure}[t]{0.48\linewidth}
    \includegraphics[width=\linewidth]{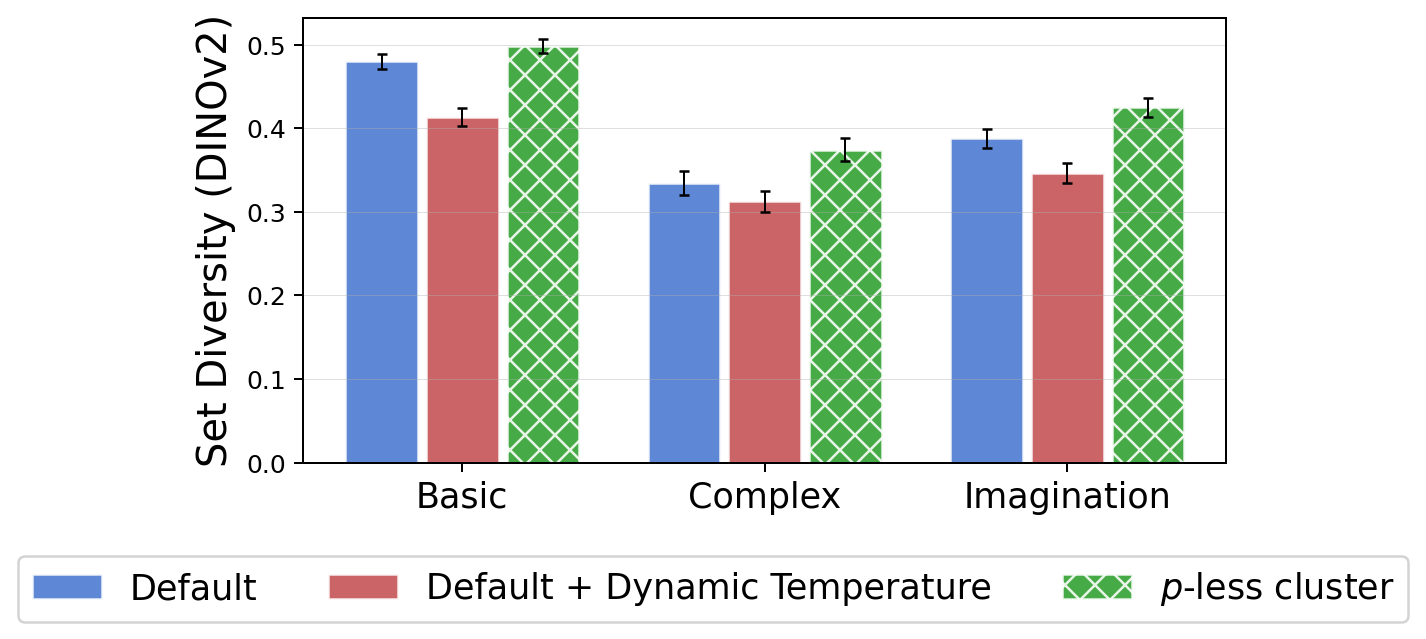}
    \caption{Janus Pro-7B}
  \end{subfigure}
  \caption{\textbf{DINOv2 diversity by PartiPrompts challenge category.} $p$-less cluster achieves larger gains in the \emph{Complex} and \emph{Imagination} categories with more open-ended and abstract prompts.}
  \label{fig:challenge_parti}
\end{figure}
\begin{figure}[h]
  \centering
  \includegraphics[width=0.75\linewidth]{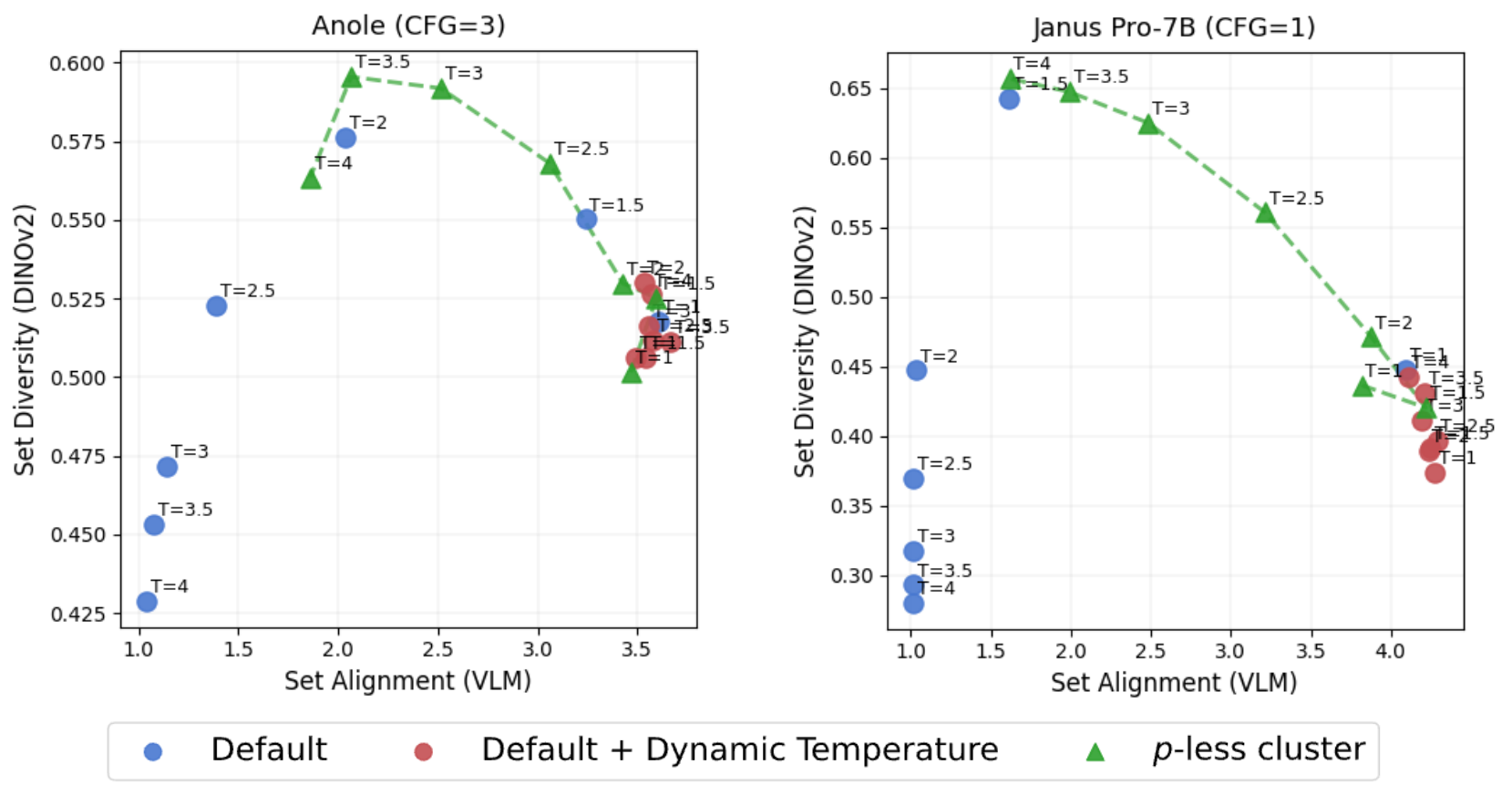}
  \caption{\textbf{Diversity--alignment tradeoff across decoding strategies for Anole and Janus Pro using GenEval Dataset (DINOv2 vs.\ Prompt alignment).} Janus Pro-7B uses low CFG due to its low diversity under default CFG (Section \ref{sec:main_results}). $p$-less cluster shifts the Pareto frontier toward higher diversity while remaining competitive on alignment while Dynamic Temperature maintains same level of high alignment but low diversity across temperatures. Plots for all four models are Figure \ref{fig:apx_diversity_quality_tradeoff}.}
  \label{fig:diversity_quality_tradeoff}
\end{figure}
To better understand these gains, we analyze diversity by prompt category for Emu3 and Janus Pro, the two models where $p$-less cluster shows more mixed results. For both models, diversity improvements are largest in the \textit{Complex} and \textit{Imagination} categories, where prompts admit multiple plausible visual realizations (Figure~\ref{fig:challenge_parti}). In contrast, for more concrete object-centric prompts (e.g., ``a photo of a car''), $p$-less cluster performs similarly to Default sampling for Emu3 and yields only marginal gains for Janus Pro. This trend is less pronounced for Anole and LlamaGen (Figure~\ref{fig:challenge_parti_apx}), suggesting that the effectiveness of $p$-less cluster depends on both prompt complexity and the model’s inherent diversity capacity.

\subsection{Diversity-alignment-quality tradeoff}
\label{sec:diversity-tradeoff}
To characterize the diversity-alignment-quality tradeoff intrinsic to image generation across different decoding strategies, we conducted a full hyperparameter sweep of temperature. Due to the high computational cost of performing full temperature sweep, results were computed using a smaller subset of GenEval ($120$ prompts) and Partiprompts ($90$ prompts) (Appendix~\ref{sec:appendix_impl}). Figure~\ref{fig:diversity_quality_tradeoff} illustrates this diversity-alignment tradeoff frontier for GenEval dataset. Across models and datasets, $p$-less cluster sampling systematically shifts the Pareto frontier toward higher diversity while maintaining comparable alignment, indicating a strictly better tradeoff rather than isolated improvements. In contrast, dynamic temperature adjustment yields inconsistent gains and can reduce diversity in some cases (Table~\ref{tab:main_results}). Furthermore, dynamic temperature adjustment does not offer meaningful tradeoffs between diversity and alignment; instead, different temperature values are clustered closely for all models except LlamaGen (see Figures ~\ref{fig:diversity_quality_tradeoff}) and \ref{fig:apx_diversity_quality_tradeoff}).
Qualitative comparisons in Figure~\ref{fig:motivation} and Figure~\ref{fig:qualitative_emu3} support these findings. Default and dynamic temperature decoding tend to produce visually similar samples for a given prompt, reflecting diversity collapse. In contrast, $p$-less cluster sampling generates a broader range of outputs, with variation in object position, color, and scene composition, while preserving prompt fidelity.

\subsection{Temperature robustness}
\label{sec:temperature-robustness}

We further investigated the robustness of different decoding methods w.r.t. temperature using the same subset of prompts as Section~\ref{sec:diversity-tradeoff}. The results depicted in Figure~\ref{fig:robustness_temperature} of Appendix~\ref{app:temperature-robustness} reveal a fundamental diversity–alignment tradeoff that no decoding strategy fully escapes: higher temperatures consistently boost Set Diversity (DINOv2) but degrade Set Alignment (VLM) across all four models. Beyond a temperature of 3.0, most methods fall below the minimum alignment gate ($\ge 3$). $p$-less cluster sampling offers the best balance. It achieves noticeably higher diversity than Default sampling while staying above the quality gate across a wider temperature range, making it the clear winner for Anole, Emu3, and LlamaGen. Nevertheless, even $p$-less cluster eventually succumbs to alignment degradation at high temperatures.
Dynamic Temperature occupies the opposite end of the spectrum: it maintains the most stable alignment and quality scores and rarely collapses catastrophically, but consistently yields lower peak diversity because it lowers effective temperature when entropy is high. Across all models, its highest-diversity-in-gate points are lower than those of $p$-less cluster.

\subsection{Effect of cluster size}

\begin{figure}[H]
  \centering
  \begin{subfigure}[t]{0.48\textwidth}
    \centering
    \includegraphics[width=\linewidth]{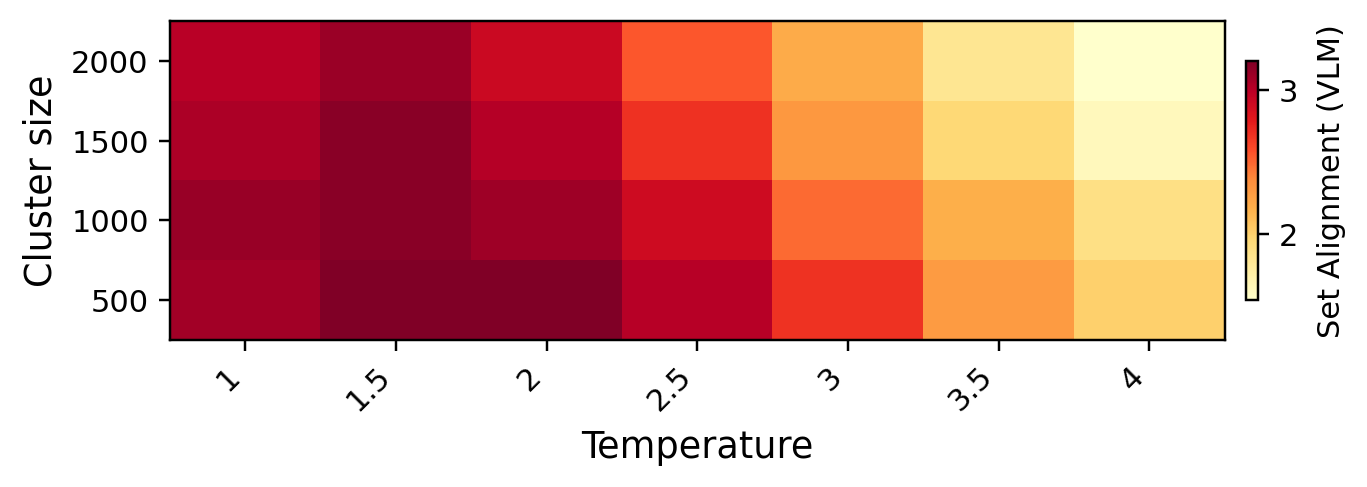}
    \label{fig:cluster_size_alignment}
  \end{subfigure}
  \hfill
  \begin{subfigure}[t]{0.48\textwidth}
    \centering
    \includegraphics[width=\linewidth]{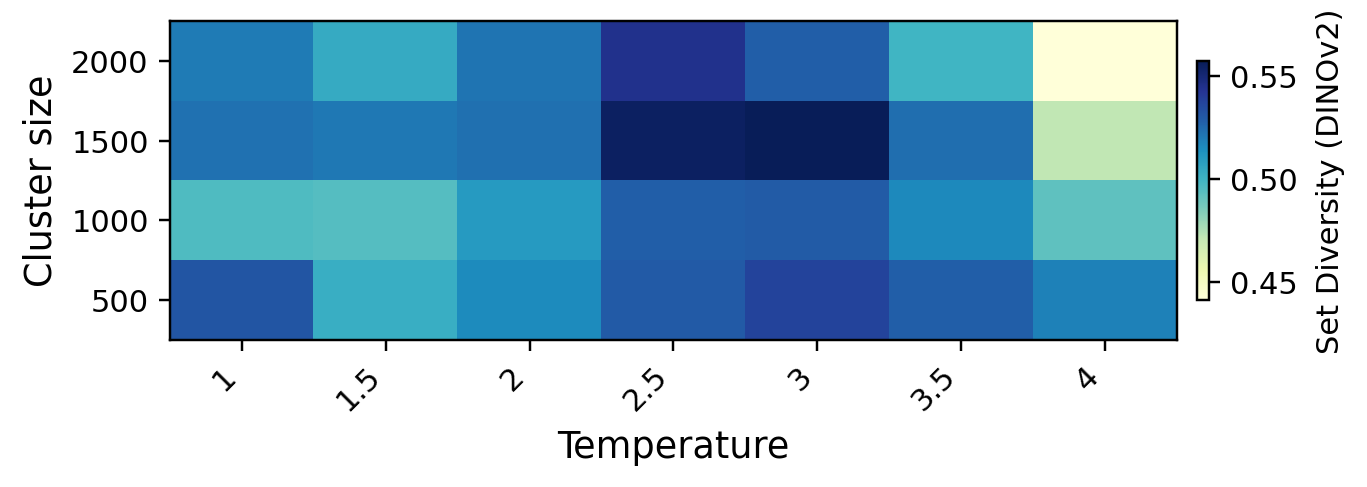}
    \label{fig:cluster_size_diversity}
  \end{subfigure}
  \caption{\textbf{Effect of number of clusters on prompt alignment (left) and diversity (right) for Anole on PartiPrompts.} Smaller number of clusters increases robustness with respect to image quality at higher temperatures and exhibits more stability in sample diversity.}
  \label{fig:effect_of_cluster_size}
\end{figure}

To investigate the effect of cluster size, we fix $k = 2.0$ and conduct experiments on an extended range of cluster size $\{500, 1000, 1500, 2000\}$. As illustrated in Figure \ref{fig:effect_of_cluster_size}, while diversity gains tend to increase with a larger number of clusters, using fewer clusters provides greater robustness to quality degradation across temperatures. This is because a smaller number of clusters induces more aggressive truncation, reducing the variance of the effective candidate set and preventing sampling from low-probability regions that often lead to artifacts. In contrast, a larger number of clusters enables finer-grained control over diversity by preserving more modes of the distribution, but it is also more sensitive to temperature. Retaining low-probability clusters increases diversity, yet it simultaneously raises the risk of sampling from unreliable regions of the distribution, resulting in degraded image quality. For most models, we find $n = 500$ is the optimal number of clusters. More details on model-specific hyperparameter are provided in Appendix \ref{sec:appendix_impl}.

\section{Conclusion}
\label{sec:conclusion}

In this work, we conducted the first comprehensive study of how decoding strategies impact sample diversity in AR T2I models. Inspired by our observation that token-level diversity does not necessarily translate into diversity at the pixel level, we proposed $p$-less cluster: a new truncation-based decoding method for AR T2I which first samples a cluster of visually similar tokens before selecting a token within the cluster. Through extensive experiments and analyses, we demonstrated that $p$-less cluster increases sample diversity across a range of AR T2I models and prompt settings, particularly those which involve more complex or abstract descriptions. Our study highlights the potential for AR models to improve image sample diversity through principled decoding methods which are designed to address the unique characteristics of sampling in AR T2I generation.

\bibliographystyle{abbrvnat}
\bibliography{manual_references}

@article{chen2025janus,
  title={Janus-Pro: Unified Multimodal Understanding and Generation with Data and Model Scaling},
  author={Chen, Xiaokang and Wu, Zhiyu and Liu, Xingchao and Pan, Zizheng and Liu, Wen and Xie, Zhenda and Yu, Xingkai and Ruan, Chong},
  journal={arXiv preprint arXiv:2501.17811},
  year={2025},
  url={https://arxiv.org/abs/2501.17811}
}

@article{sun2024llamagen,
  title={Autoregressive Model Beats Diffusion: Llama for Scalable Image Generation},
  author={Sun, Peize and Jiang, Yi and Chen, Zehuan and Yuan, Jiangmiao and others},
  journal={arXiv preprint arXiv:2406.06525},
  year={2024},
  url={https://arxiv.org/abs/2406.06525}
}

@article{wang2024emu3,
  title={Emu3: Next-token Prediction is All You Need},
  author={Wang, Xinlong and Zhang, Zhengxiong and others},
  journal={arXiv preprint arXiv:2409.18869},
  year={2024},
  url={https://arxiv.org/abs/2409.18869}
}

@article{chern2024anole,
  title={Anole: An Open, Autoregressive, Native Large Multimodal Models for Interleaved Image-Text Generation},
  author={Chern, Ethan and Su, Zhibin and others},
  journal={arXiv preprint arXiv:2407.06135},
  year={2024},
  url={https://arxiv.org/abs/2407.06135}
}

@misc{flux2024,
    author={Black Forest Labs},
    title={FLUX},
    year={2024},
    howpublished={\url{https://github.com/black-forest-labs/flux}},
}

@misc{stabilityai2024sd35,
  author = {Stability AI},
  title = {Stable Diffusion 3.5 Technical Release},
  year = {2024},
  howpublished = {\url{https://stability.ai/news/introducing-stable-diffusion-3-5}},
  note = {Accessed: 2026-05-06}
}

@article{imagen,
  title={Photorealistic text-to-image diffusion models with deep language understanding},
  author={Saharia, Chitwan and Chan, William and Saxena, Saurabh and Li, Lala and Whang, Jay and Denton, Emily L and Ghasemipour, Kamyar and Gontijo Lopes, Raphael and Karagol Ayan, Burcu and Salimans, Tim and others},
  journal={Advances in neural information processing systems},
  volume={35},
  pages={36479--36494},
  year={2022}
}

@inproceedings{yu2022scaling,
  title={Scaling Autoregressive Models for Content-Rich Text-to-Image Generation},
  author={Yu, Jiahui and Xu, Yuanzhong and Koh, Jing Yu and Luong, Thang and Baid, Gunjan and others},
  booktitle={Transactions on Machine Learning Research},
  year={2022},
  url={https://arxiv.org/abs/2206.10789}
}

@inproceedings{ghosh2023geneval,
  title={GenEval: An Object-Focused Framework for Evaluating Text-to-Image Alignment},
  author={Ghosh, Anirban and others},
  booktitle={Advances in Neural Information Processing Systems},
  year={2023},
  url={https://arxiv.org/abs/2310.11513}
}

@article{oquab2023dinov2,
  title={DINOv2: Learning Robust Visual Features without Supervision},
  author={Oquab, Maxime and Darcet, Timoth{\'e}e and others},
  journal={arXiv preprint arXiv:2304.07193},
  year={2023},
  url={https://arxiv.org/abs/2304.07193}
}

@article{friedman2022vendi,
  title={The vendi score: A diversity evaluation metric for machine learning},
  author={Friedman, Dan and Dieng, Adji Bousso},
  journal={arXiv preprint arXiv:2210.02410},
  year={2022}
}

@article{ramesh2022clipscore,
  title={Hierarchical text-conditional image generation with clip latents},
  author={Ramesh, Aditya and Dhariwal, Prafulla and Nichol, Alex and Chu, Casey and Chen, Mark},
  journal={arXiv preprint arXiv:2204.06125},
  volume={1},
  number={2},
  pages={3},
  year={2022}
}

@inproceedings{hu2023tifa,
  title={Tifa: Accurate and interpretable text-to-image faithfulness evaluation with question answering},
  author={Hu, Yushi and Liu, Benlin and Kasai, Jungo and Wang, Yizhong and Ostendorf, Mari and Krishna, Ranjay and Smith, Noah A},
  booktitle={Proceedings of the IEEE/CVF International Conference on Computer Vision},
  pages={20406--20417},
  year={2023}
}

@article{cho2023davidsonian,
  title={Davidsonian scene graph: Improving reliability in fine-grained evaluation for text-to-image generation},
  author={Cho, Jaemin and Hu, Yushi and Garg, Roopal and Anderson, Peter and Krishna, Ranjay and Baldridge, Jason and Bansal, Mohit and Pont-Tuset, Jordi and Wang, Su},
  journal={arXiv preprint arXiv:2310.18235},
  year={2023}
}

@inproceedings{
    parmar2026scaling,
    title={Scaling Group Inference for Diverse and High-Quality Generation},
    author={Gaurav Parmar and Or Patashnik and Daniil Ostashev and Kuan-Chieh Jackson Wang and Kfir Aberman and Srinivasa Narasimhan and Jun-Yan Zhu},
    booktitle={The Fourteenth International Conference on Learning Representations},
    year={2026},
    url={https://openreview.net/forum?id=IyTNxjTuWT}
}

@inproceedings{wang2025diverse,
    title={Diverse Text-to-Image Generation via Contrastive Noise Optimization},
    author={Wang, Fuyun and Teng, Yao and Liu, Xian and others},
    booktitle={Thirty-ninth Conference on Neural Information Processing Systems},
    year={2025},
    url={https://openreview.net/forum?id=EVRMnAREc3}
}

@article{yang2025diversear,
  title={DiverseAR: Boosting Diversity in Bitwise Autoregressive Image Generation},
  author={Yang, Ying and Lv, Zhengyao and Pan, Tianlin and Wang, Haofan and Yang, Binxin and Yin, Hubery and Li, Chen and Si, Chenyang},
  journal={arXiv preprint arXiv:2512.02931},
  year={2025}
}

@inproceedings{han2025infinity,
  title={Infinity: Scaling Bitwise AutoRegressive Modeling for High-Resolution Image Synthesis},
  author={Han, Jian and Liu, Jinlai and Yi, Jiang and Yan, Bin and Zhang, Yuqi and Yuan, Zehuan and Peng, Bingyue and Liu, Xiaobing},
  booktitle={Proceedings of the IEEE/CVF Conference on Computer Vision and Pattern Recognition (CVPR)},
  pages={15733--15744},
  year={2025},
  url={https://openaccess.thecvf.com/content/CVPR2025/html/Han_Infinity_Scaling_Bitwise_AutoRegressive_Modeling_for_High-Resolution_Image_Synthesis_CVPR_2025_paper.html}
}

@article{he2024zipar,
  title={ZipAR: Accelerating Auto-regressive Image Generation through Spatial Locality},
  author={He, Yefei and Chen, Feng and He, Yuanyu and He, Shaoxuan and Zhou, Hong and Zhang, Kaipeng and Zhuang, Bohan},
  journal={arXiv preprint arXiv:2412.04062},
  year={2024}
}

@inproceedings{teng2025speculative,
    title={Speculative Jacobi-Denoising Decoding for Accelerating Autoregressive Text-to-image Generation},
    author={Teng, Yao and Wang, Fuyun and Liu, Xian and others},
    booktitle={The Thirteenth International Conference on Learning Representations},
    year={2025},
    url={https://openreview.net/forum?id=y2eWc6jrlu}
}

@inproceedings{
ma2026towards,
title={Towards Better \& Faster Autoregressive Image Generation: From the Perspective of Entropy},
author={Xiaoxiao Ma and Feng Zhao and Pengyang Ling and Haibo Qiu and Zhixiang Wei and Hu Yu and Jie Huang and Zhixiong Zeng and Lin Ma},
booktitle={The Thirty-ninth Annual Conference on Neural Information Processing Systems},
year={2026},
url={https://openreview.net/forum?id=LiQH1MOCMs}
}

@inproceedings{yin2024improved,
    title={Improved Distribution Matching Distillation for Fast Image Synthesis},
    author={Yin, Tianwei and Gharbi, Micha{\"e}l and Park, Taesung and Zhang, Richard and Shechtman, Eli and Durand, Fredo and Freeman, William T},
    booktitle={NeurIPS},
    year={2024}
}

@inproceedings{heusel2017gans,
  author    = {Heusel, Martin and Ramsauer, Hubert and Unterthiner, Thomas and Nessler, Bernhard and Hochreiter, Sepp},
  booktitle = {Advances in Neural Information Processing Systems},
  pages     = {6626--6637},
  title     = {GANs Trained by a Two Time-Scale Update Rule Converge to a Local Nash Equilibrium},
  volume    = {30},
  year      = {2017}
}

@InProceedings{Gandikota_2026_WACV,
    author    = {Gandikota, Rohit and Bau, David},
    title     = {Distilling Diversity and Control in Diffusion Models},
    booktitle = {Proceedings of the IEEE/CVF Winter Conference on Applications of Computer Vision (WACV)},
    month     = {March},
    year      = {2026},
    pages     = {1304-1313}
}

@article{pless,
  title={p-LESS SAMPLING: A ROBUST HYPERPARAMETER-FREE APPROACH FOR LLM DECODING},
  author={Runyan Tan and Shuang Wu and Phillip Howard},
  journal={arXiv preprint arXiv:2509.23234},
  year={2025},
  note={[cs.AI, cs.CL]},
  url={https://arxiv.org/abs/2509.23234}
}

@article{liu2025infinitystar,
  title={InfinityStar: Unified Spacetime AutoRegressive Modeling for Visual Generation},
  author={Liu, Jinlai and Han, Jian and Yan, Bin and Zhu, Fengda and Wang, Xing and Jiang, Yi and PENG, BINGYUE and Yuan, Zehuan and others},
  journal={Advances in neural information processing systems},
  volume={38},
  year={2025}
}

@article{li2024autoregressive,
  title={Autoregressive image generation without vector quantization},
  author={Li, Tianhong and Tian, Yonglong and Li, He and Deng, Mingyang and He, Kaiming},
  journal={Advances in Neural Information Processing Systems},
  volume={37},
  pages={56424--56445},
  year={2024}
}

@inproceedings{yao2025denoising,
  title={Denoising token prediction in masked autoregressive models},
  author={Yao, Ting and Li, Yehao and Pan, Yingwei and Qiu, Zhaofan and Mei, Tao},
  booktitle={Proceedings of the IEEE/CVF international conference on computer vision},
  pages={18024--18033},
  year={2025}
}

@article{tian2024visual,
  title={Visual autoregressive modeling: Scalable image generation via next-scale prediction},
  author={Tian, Keyu and Jiang, Yi and Yuan, Zehuan and Peng, Bingyue and Wang, Liwei},
  journal={Advances in neural information processing systems},
  volume={37},
  pages={84839--84865},
  year={2024}
}

@inproceedings{mu2025editar,
  title={Editar: Unified conditional generation with autoregressive models},
  author={Mu, Jiteng and Vasconcelos, Nuno and Wang, Xiaolong},
  booktitle={Proceedings of the Computer Vision and Pattern Recognition Conference},
  pages={7899--7909},
  year={2025}
}

@inproceedings{yu2022vector,
  title={Vector-quantized Image Modeling with Improved VQGAN},
  author={Yu, Jiahui and Li, Xin and Koh, Jing Yu and Zhang, Han and Pang, Ruoming and Qin, James and Ku, Alexander and Xu, Yuanzhong and Baldridge, Jason and Wu, Yonghui},
  booktitle={International Conference on Learning Representations},
  year={2022}
}

@article{razavi2019generating,
  title={Generating diverse high-fidelity images with vq-vae-2},
  author={Razavi, Ali and Van den Oord, Aaron and Vinyals, Oriol},
  journal={Advances in neural information processing systems},
  volume={32},
  year={2019}
}

@article{van2016conditional,
  title={Conditional image generation with pixelcnn decoders},
  author={Van den Oord, Aaron and Kalchbrenner, Nal and Espeholt, Lasse and Vinyals, Oriol and Graves, Alex and others},
  journal={Advances in neural information processing systems},
  volume={29},
  year={2016}
}

@inproceedings{van2016pixel,
  title={Pixel recurrent neural networks},
  author={Van Den Oord, A{\"a}ron and Kalchbrenner, Nal and Kavukcuoglu, Koray},
  booktitle={International conference on machine learning},
  pages={1747--1756},
  year={2016},
  organization={PMLR}
}

@article{zhang2403edt,
  title={EDT: Improving Large Language Models’ Generation by Entropy-based Dynamic Temperature Sampling (2024)},
  author={Zhang, Shimao and Bao, Yu and Huang, Shujian},
  journal={URL https://arxiv. org/abs/2403.14541}
}

@article{nguyen2024turning,
  title={Turning up the heat: Min-p sampling for creative and coherent llm outputs},
  author={Nguyen, Minh Nhat and Baker, Andrew and Neo, Clement and Roush, Allen and Kirsch, Andreas and Shwartz-Ziv, Ravid},
  journal={arXiv preprint arXiv:2407.01082},
  year={2024}
}

@article{fan2018hierarchical,
  title={Hierarchical neural story generation},
  author={Fan, Angela and Lewis, Mike and Dauphin, Yann},
  journal={arXiv preprint arXiv:1805.04833},
  year={2018}
}

@inproceedings{holtzmancurious,
  title={The Curious Case of Neural Text Degeneration},
  author={Holtzman, Ari and Buys, Jan and Du, Li and Forbes, Maxwell and Choi, Yejin},
  booktitle={International Conference on Learning Representations}
}

@inproceedings{tan2026pless,
  title={p-less Sampling: A Robust Hyperparameter-Free Approach for LLM Decoding},
  author={Tan, Runyan and Wu, Shuang and Howard, Phillip},
  booktitle={The Fourteenth International Conference on Learning Representations}
}

@article{internvl,
  title={Internvl3: Exploring advanced training and test-time recipes for open-source multimodal models},
  author={Zhu, Jinguo and Wang, Weiyun and Chen, Zhe and Liu, Zhaoyang and Ye, Shenglong and Gu, Lixin and Tian, Hao and Duan, Yuchen and Su, Weijie and Shao, Jie and others},
  journal={arXiv preprint arXiv:2504.10479},
  year={2025}
}

@inproceedings{angelova2025integration,
  title={Integration of artificial intelligence into traditional graphic designer software products--A relief and a challenge},
  author={Angelova, Nadezhda},
  booktitle={AIP Conference Proceedings},
  volume={3274},
  number={1},
  pages={080008},
  year={2025},
  organization={AIP Publishing LLC}
}

\newpage
\appendix

\section{Limitations \& Broader Impacts}
\label{sec:appendix_impact_limit}

\paragraph{Limitations.} Our approach operates purely at inference time, and its effectiveness is therefore bounded by the base model’s underlying token distribution. In particular, gains in diversity depend on the model assigning sufficient probability mass to multiple desirable modes; when the distribution is overly peaked (e.g., as observed for Janus Pro under default CFG), the potential for improvement is inherently limited.

Additionally, the method introduces two hyperparameters (Section~\ref{sec:key_design_choice}). Although we restrict tuning to a small 2x2 combinations, selecting optimal values still requires model-specific calibration.

\paragraph{Broader impacts.} Though our work does not have direct societal impacts, given that AI-generated images are being used for foundation model training, improving sample diversity in generative models mitigates the risk of mode collapse,  ensuring that underrepresented demographic groups and rare cultural nuances are accurately reflected in synthetic data.

\section{Additional analysis of AR image generation}

\subsection{Comparison of entropy distribution between text and image generation}
\label{sec:appendix_entropy_hist}

To characterize how decoding uncertainty differs between language and image generation within the same model architecture, we compare per-token Shannon entropy across text and image generation modes for Janus Pro and Emu3.
For each benchmark prompt, the model is tasked with generating both an image and a descriptive image caption conditioned on the same topic.
Specifically, the text branch receives the instruction: \textit{``Topic: \{prompt\}. Generate a descriptive image caption using $N$ tokens.''}
Both branches are evaluated under identical temperature and CFG settings, ensuring that any observed entropy differences are attributable to the modality rather than generation hyperparameters.

\begin{figure}[ht]
  \centering
  \includegraphics[width=\linewidth]{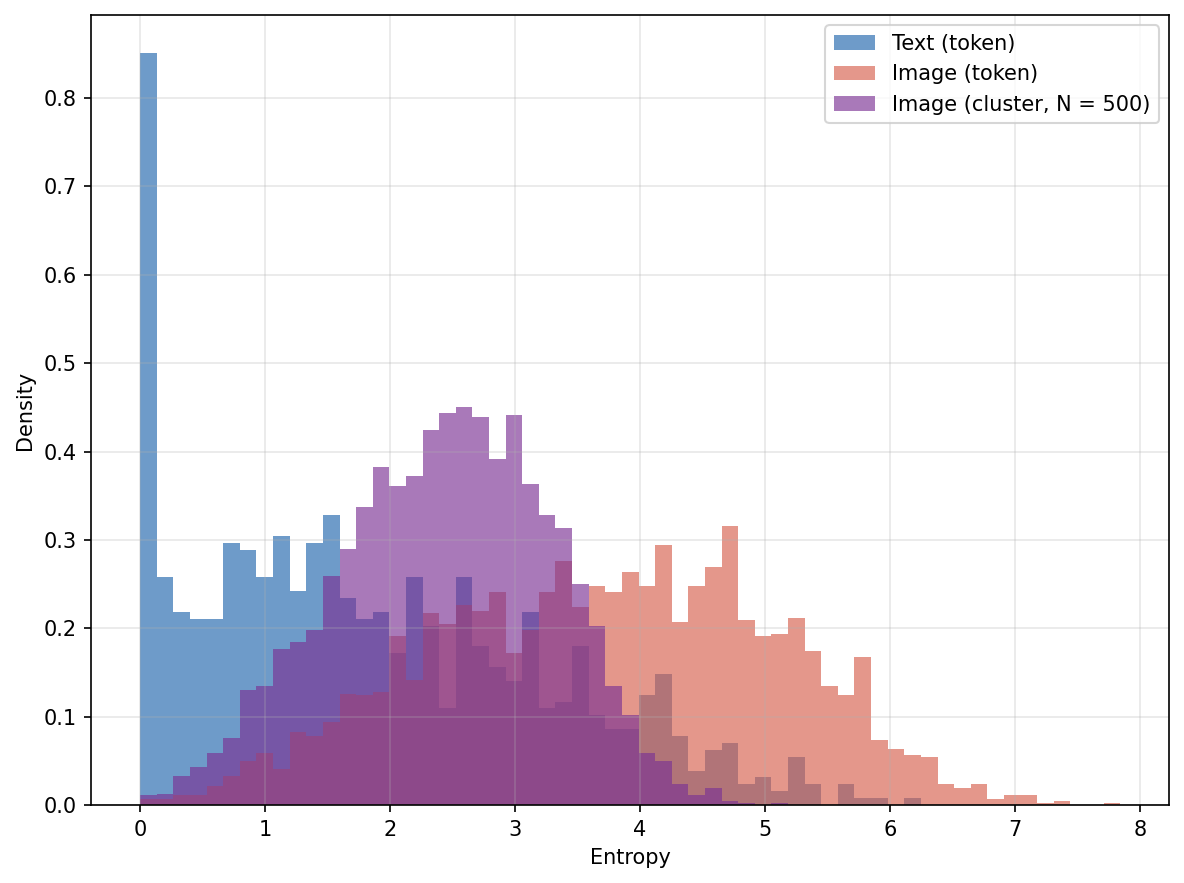}
  \caption{\textbf{Shannon entropy distribution over decoding steps for text vs.\ image generation in Janus Pro, aggregated across 10 prompts.} The image generation mode sustains substantially higher per-token entropy throughout decoding compared to the text generation mode under identical temperature and CFG settings, confirming that AR image models operate in a fundamentally higher-entropy regime.}
  \label{fig:text_vs_image_entropy_hist}
\end{figure}

\subsection{Additional illustration of different decoding strategies at high temperature}
\label{sec:appendix_motivation_extra}
Figure~\ref{fig:apx_motivation} mirrors Figure~\ref{fig:motivation} but adds 2 more methods for comparison: $p$-less without clustering and Default cluster (clustering without $p$-less.

\begin{figure}[t]
  \centering
  \includegraphics[width=\linewidth]{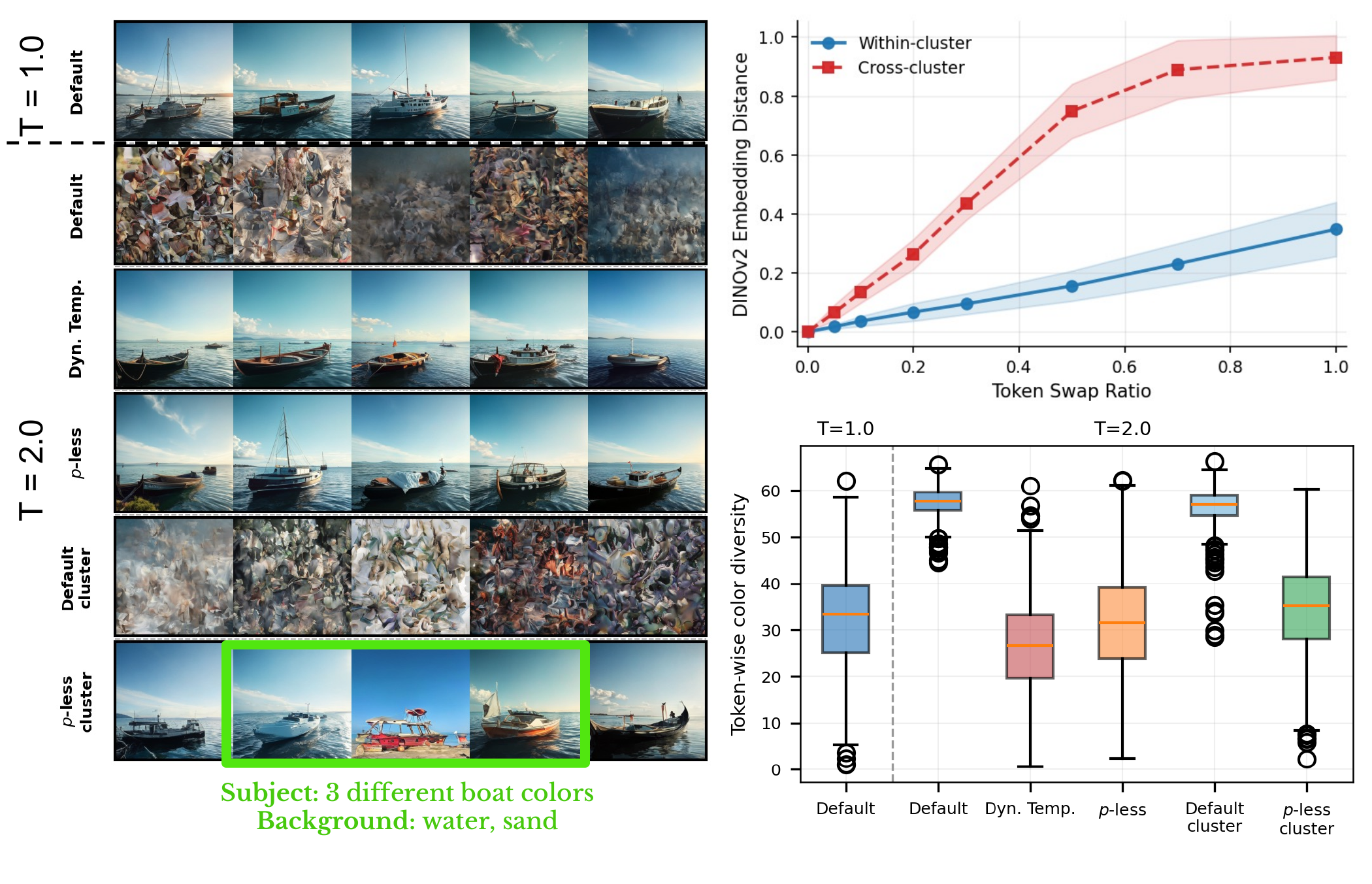}
  \caption{\textbf{Qualitative and quantitative comparison of how different decoding strategies affect image-level and token-level diversity (Janus Pro).} At high temperature, both Default and Default cluster sampling admits too many tokens which leads to degeneration. Both Dynamic Temperature and $p$-less mitigates quality issues at the cost of limited diversity due to low number of distinct tokens. $p$-less cluster strikes a better balance by combining cluster-level sampling (increasing diversity) and truncation (maintaining quality).}
  \label{fig:apx_motivation}
\end{figure}

\subsection{Robustness of decoding strategies w.r.t. temperature}
\label{app:temperature-robustness}

Figure~\ref{fig:robustness_temperature} illustrates the relationship between temperature, set alignment, and set diversity across different models and decoding strategies. See Section~\ref{sec:temperature-robustness} for additional discussion.

\begin{figure}[h]
  \centering  \includegraphics[width=1\linewidth]{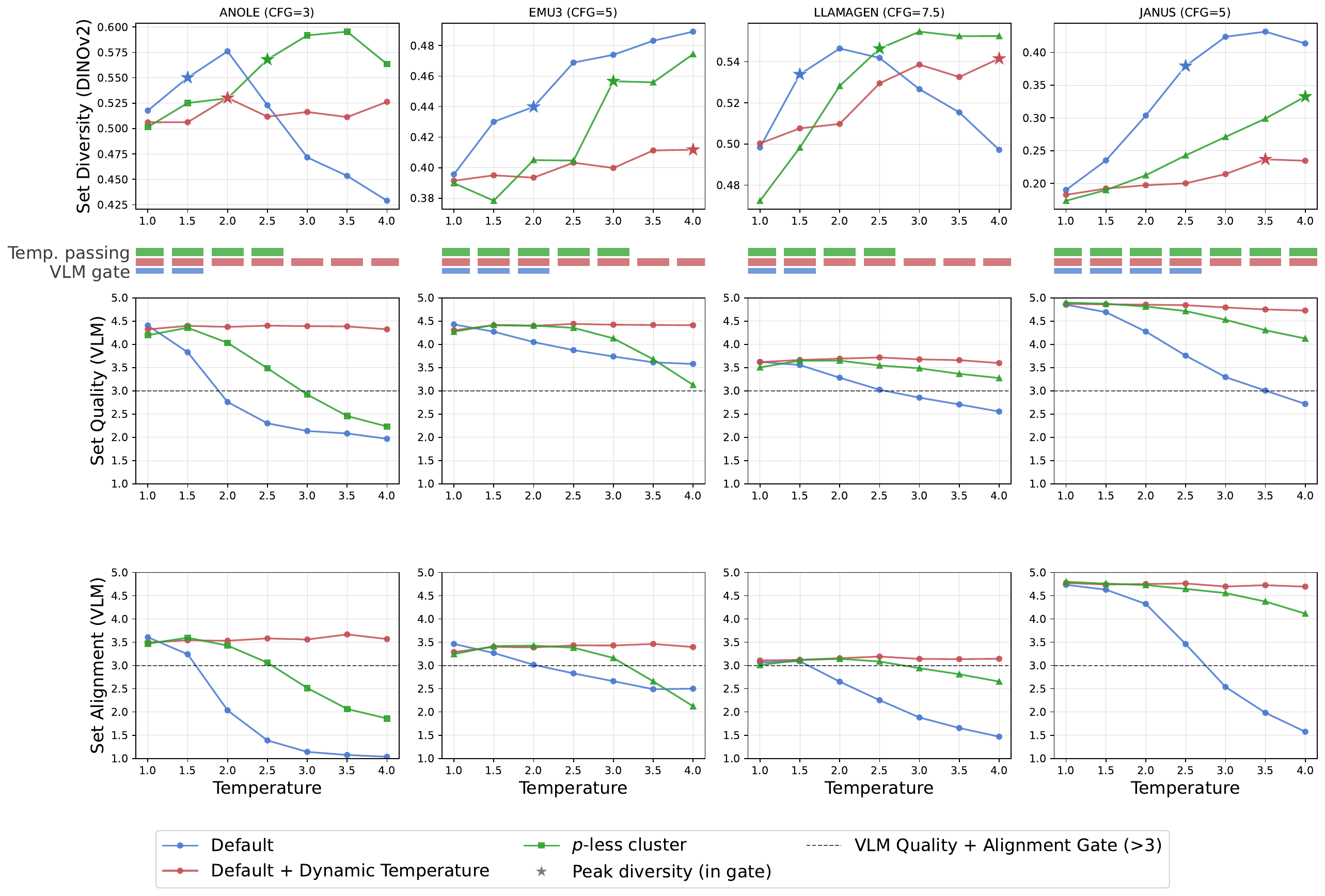}
  \caption{\textbf{Robustness of decoding strategies across sampling temperatures with respect to Diversity (top) and Alignment (bottom) on GenEval.} Starred markers indicate the highest-diversity point that remains above the alignment gate (dashed line at VLM Alignment $> 3.0$). $p$-less cluster (green) remains above the quality gate across a wider range of temperatures than Default sampling, though narrower than Dynamic Temperature. $p$-less cluster consistently achieves higher diversity than Dynamic Temperature across all temperatures and all models.}
  \label{fig:robustness_temperature}
\end{figure}

\begin{figure}[h]
  \centering  \includegraphics[width=1\linewidth]{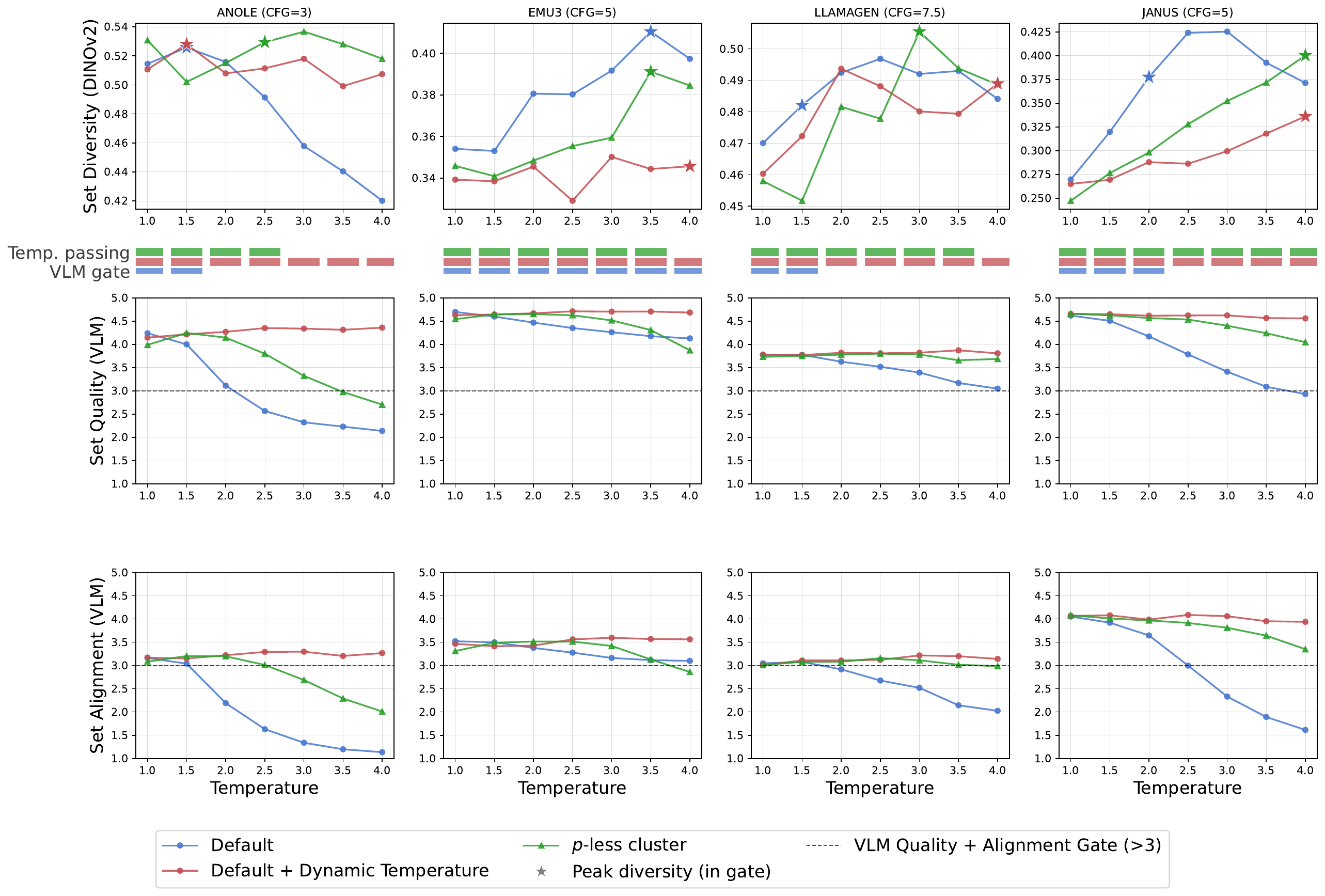}
  \caption{\textbf{Robustness of decoding strategies across sampling temperatures with respect to Diversity (top) and Alignment (bottom) on Partiprompts.} Starred markers indicate the highest-diversity point that remains above the alignment gate (dashed line at VLM Alignment $> 3.0$). $p$-less cluster (green) remains above the quality gate across a wider range of temperatures than Default sampling, though narrower than Dynamic Temperature. $p$-less cluster consistently achieves higher diversity than Dynamic Temperature across all temperatures and all models.}
  \label{fig:robustness_temperature_parti}
\end{figure}

\section{Implementation details}
\label{sec:appendix_impl}
All experiments are conducted on NVIDIA A100 GPUs. We provide implementation details for each model and decoding strategy below.
\subsection{Diffusion models.}\label{diffusion_models}
We use Flux.1-Dev and Flux.1-Schnell~\citep{flux2024} with their default inference settings.\footnote{Default inference settings are from \url{https://github.com/black-forest-labs/flux}.}
For Flux.1-Dev, we apply classifier-free guidance (CFG$=$3.5) with 50 denoising steps.
For Flux.1-Schnell, we use 4 inference steps without CFG.

\subsection{Autoregressive models.}\label{ar_models}
\label{app:ar-model-details}

Additional details of the AR models employed in our study are provided in Table~\ref{tab:codebook_analysis}, including visual codebook statistics and underlying codebook architectures.
\begin{table}[ht]
\centering
\caption{Visual codebook statistics across autoregressive models. Models are grouped by shared codebook architectures.}
\label{tab:codebook_analysis}
\small
\begin{tabular}{@{}llcccccc@{}}
\toprule
\multirow{2}{*}{\textbf{Group}} & \multirow{2}{*}{\textbf{Model}} & \textbf{Vocab} & \textbf{Emb.} & \multicolumn{2}{c}{\textbf{L2 Norm}} & \multicolumn{2}{c}{\textbf{Pairwise Dist.}} \\ \cmidrule(lr){5-6} \cmidrule(lr){7-8} 
 &  & \textbf{Size} & \textbf{Dim.} & \textbf{Mean} & \textbf{Std.} & \textbf{Mean} & \textbf{Median} \\ \midrule
\multirow{1}{*}{VQ-GAN} & \textbf{Anole} & \multirow{1}{*}{8,192} & \multirow{1}{*}{256} & \multirow{1}{*}{7.8043} & \multirow{1}{*}{11.6681} & \multirow{1}{*}{13.9432} & \multirow{1}{*}{13.3072} \\
 \midrule
MoVQ-GAN & \textbf{Emu3} & 32,768 & 4 & 0.4961 & 0.7123 & 0.8541 & 0.8110 \\ \midrule
\multirow{2}{*}{VQ-VAE} & \textbf{Janus Pro 7B} & \multirow{2}{*}{16,384} & \multirow{2}{*}{8} & \multirow{2}{*}{1.0358} & \multirow{2}{*}{0.0325} & \multirow{2}{*}{1.4354} & \multirow{2}{*}{1.4603} \\
 & \textbf{LlamaGen} &  &  &  &  &  &  \\ \bottomrule
\end{tabular}
\end{table}

\textbf{Default decoding.} We use each model’s standard decoding strategy: multinomial sampling for Janus Pro-7B and LLaMAGen, top-$p$ sampling ($p=0.9$) for Anole, and top-$k$ sampling ($k=2048$) for Emu3.

\textbf{Dynamic temperature~\citep{ma2026towards}.} We follow the recommended hyperparameter settings from the original paper for the two hyperparameters: $\alpha$, the decay rate of the temperature, and $\theta$, the lower bound of the temperature. Specifically, we use $\alpha{=}3.0$ and $\theta{=}0.6$ for LLaMAGen, and $\alpha{=}2.5$ and $\theta{=}0.6$ for Anole, Emu3, and Janus Pro-7B.
The latter settings are adopted from those recommended for Lumina-mGPT in the original paper, which the authors note are appropriate for well-trained models that benefit from smoother temperature schedules.

\textbf{$p$-less cluster sampling.} We use a small 2x2 grid for the two hyperparameters: $k$, the moment exponent that controls the sharpness of the cluster distribution, and $n$, the number of clusters. Specifically, we evaluate $k \in \{1.5, 2.0\}$ and $n \in \{500, 1000\}$.

\paragraph{Hyperpameter tuning protocol.}
We conduct hyperparameter tuning for each model, decoding strategy, and benchmark using a smaller subset of prompts: $120$ prompts from GenEval ($20$ prompts randomly sampled from 6 categories) and $90$ prompts from PartiPrompts ($30$ prompts randomly sampled from 3 categories). We select the configuration that maximizes diversity metrics while maintaining acceptable image quality and alignment (VLM Quality and Alignment Score $> 3.0$).

\paragraph{Optimal hyperparameter setting for all methods.}
Table~\ref{tab:hyperparams} summarizes the optimal hyperparameters for each model, decoding strategy, and benchmark. For $p$-less cluster sampling, we find that a smaller number of clusters ($n=500$) generally provides better robustness to quality degradation across temperatures, while a larger number of clusters ($n=1000$) can yield higher diversity gains at the cost of stability. 
\begin{table}[t]
\centering
\caption{%
  Optimal hyperparameters per model, decoding strategy, and benchmark.
  $T$: temperature; $\alpha$, $\theta$: decay rate and lower bound of dynamic
  temperature; $k$: moment exponent; $n$: number of token clusters;
  $p$: nucleus threshold; $K$: top-$K$ vocabulary truncation.
}
\label{tab:hyperparams}
\setlength{\tabcolsep}{7pt}
\renewcommand{\arraystretch}{1.15}
\begin{tabular}{@{}llcc@{}}
\toprule
\textbf{Model} & \textbf{Method} & \textbf{GenEval} & \textbf{PartiPrompts} \\
\midrule
\multirow{3}{*}{Anole}
  & Standard
    & $T{=}1.5,\;p{=}0.9$
    & $T{=}1.5,\;p{=}0.9$ \\
  & Dyn.\ Temp.
    & $T{=}2.0,\;\alpha{=}2.5,\;\theta{=}0.6$
    & $T{=}1.5,\;\alpha{=}2.5,\;\theta{=}0.6$ \\
  & $p$-less Cluster
    & $T{=}2.5,\;k{=}1.5,\;n{=}1000$
    & $T{=}2.5,\;k{=}1.5,\;n{=}500$ \\
\midrule
\multirow{3}{*}{Emu3}
  & Standard
    & $T{=}2.0,\;K{=}4096$
    & $T{=}3.0,\;K{=}4096$ \\
  & Dyn.\ Temp.
    & $T{=}4.0,\;\alpha{=}2.5,\;\theta{=}0.6$
    & $T{=}3.0,\;\alpha{=}2.5,\;\theta{=}0.6$ \\
  & $p$-less Cluster
    & $T{=}3.0,\;k{=}2.0,\;n{=}500$
    & $T{=}3.5,\;k{=}2.0,\;n{=}500$ \\
\midrule
\multirow{3}{*}{Janus Pro-7B}
  & Standard
    & $T{=}2.5$
    & $T{=}2.0$ \\
  & Dyn.\ Temp.
    & $T{=}3.5,\;\alpha{=}2.5,\;\theta{=}0.6$
    & $T{=}4.0,\;\alpha{=}2.5,\;\theta{=}0.6$ \\
  & $p$-less Cluster
    & $T{=}4.0,\;k{=}1.5,\;n{=}500$
    & $T{=}4.0,\;k{=}1.5,\;n{=}500$ \\
\midrule
\multirow{3}{*}{LlamaGen}
  & Standard
    & $T{=}1.5$
    & $T{=}1.5$ \\
  & Dyn.\ Temp.
    & $T{=}4.0,\;\alpha{=}3.0,\;\theta{=}0.6$
    & $T{=}2.0,\;\alpha{=}3.0,\;\theta{=}0.6$ \\
  & $p$-less Cluster
    & $T{=}3.0,\;k{=}1.5,\;n{=}500$
    & $T{=}3.0,\;k{=}2.0,\;n{=}500$ \\
\bottomrule
\end{tabular}
\end{table}

\subsection{Licensing and usage.}
\label{sec:appendix_license}

\begin{table}[H]
\centering
\begin{tabular}{ll}
\toprule
\textbf{Dataset} & \textbf{License} \\
\midrule
GenEval~\citep{ghosh2023geneval} & MIT \\
PartiPrompts~\citep{yu2022scaling} & Apache-2.0 \\
\bottomrule
\end{tabular}
\end{table}

\begin{table}[H]
\centering
\begin{tabular}{ll}
\toprule
\textbf{Model} & \textbf{License} \\
\midrule
Janus Pro~\citep{chen2025janus} & MIT \\
LlamaGen~\citep{sun2024llamagen} & Apache-2.0 \\
Anole~\citep{chern2024anole} & CC BY 4.0 \\
Emu3~\citep{wang2024emu3} & Apache-2.0 \\
FLUX.1-Dev~\citep{flux2024} & FLUX.1-dev Non-Commercial \\
FLUX.1-Schnell~\citep{flux2024} & Apache-2.0 \\
\bottomrule
\end{tabular}
\end{table}

\section{Addtional experimental results}
\subsection{Full quality-alignment-diversity results}
\label{sec:appendix_full_tables}
Tables~\ref{tab:geneval} and~\ref{tab:parti} report the complete results on GenEval and PartiPrompts, including quality, alignment, and diversity metrics for all model–strategy pairs.

\begin{table*}[ht]
\centering
\caption{Full results on the \textbf{GenEval} benchmark comparing Default, Dynamic Temperature, and $p$-less cluster decoding strategies. Quality is measured by a VLM judge score, alignment by VLM and CLIPScore, and diversity by DINOv2 and Vendi Score. Bold indicates the best value per model per metric.}
\label{tab:geneval}
\begin{tabular}{p{1.6cm} p{3.5cm} r r r r r}
\toprule
\multirow{2}{*}{Model} & \multirow{2}{*}{Decoding Strategy} & \multirow{2}{*}{Quality} & \multicolumn{2}{c}{Alignment} & \multicolumn{2}{c}{Diversity} \\
\cmidrule(lr){4-5}\cmidrule(lr){6-7}
 & & VLM & VLM & CLIPScore & DINOv2 & Vendi \\
\midrule
\multirow{3}{*}{Anole} & Default & 3.820{\scriptsize $\pm$0.022} & 3.207{\scriptsize $\pm$0.042} & 0.305{\scriptsize $\pm$0.001} & 0.548{\scriptsize $\pm$0.006} & 3.494{\scriptsize $\pm$0.027} \\
 & Dynamic Temperature & \textbf{4.369}{\scriptsize $\pm$0.020} & \textbf{3.572}{\scriptsize $\pm$0.047} & \textbf{0.316}{\scriptsize $\pm$0.001} & 0.505{\scriptsize $\pm$0.007} & 3.247{\scriptsize $\pm$0.032} \\
 & $p$-less cluster (Ours) & 3.483{\scriptsize $\pm$0.021} & 3.038{\scriptsize $\pm$0.042} & 0.302{\scriptsize $\pm$0.001} & \textbf{0.563}{\scriptsize $\pm$0.006} & \textbf{3.549}{\scriptsize $\pm$0.025} \\
\midrule
\multirow{3}{*}{Emu3} & Default & 4.019{\scriptsize $\pm$0.023} & 2.961{\scriptsize $\pm$0.046} & 0.287{\scriptsize $\pm$0.001} & \textbf{0.441}{\scriptsize $\pm$0.007} & \textbf{2.988}{\scriptsize $\pm$0.030} \\
 & Dynamic Temperature & \textbf{4.375}{\scriptsize $\pm$0.020} & \textbf{3.381}{\scriptsize $\pm$0.047} & \textbf{0.301}{\scriptsize $\pm$0.001} & 0.405{\scriptsize $\pm$0.007} & 2.809{\scriptsize $\pm$0.029} \\
 & $p$-less cluster (Ours) & 4.084{\scriptsize $\pm$0.022} & 3.076{\scriptsize $\pm$0.045} & 0.294{\scriptsize $\pm$0.001} & \textbf{0.441}{\scriptsize $\pm$0.007} & 2.972{\scriptsize $\pm$0.031} \\
\midrule
\multirow{3}{*}{Janus Pro-7B} & Default & 3.760{\scriptsize $\pm$0.026} & 3.475{\scriptsize $\pm$0.038} & 0.303{\scriptsize $\pm$0.001} & \textbf{0.374}{\scriptsize $\pm$0.006} & \textbf{2.684}{\scriptsize $\pm$0.027} \\
 & Dynamic Temperature & \textbf{4.759}{\scriptsize $\pm$0.028} & \textbf{4.762}{\scriptsize $\pm$0.037} & \textbf{0.337}{\scriptsize $\pm$0.001} & 0.222{\scriptsize $\pm$0.005} & 1.994{\scriptsize $\pm$0.023} \\
 & $p$-less cluster (Ours) & 4.097{\scriptsize $\pm$0.022} & 4.059{\scriptsize $\pm$0.033} & 0.320{\scriptsize $\pm$0.001} & 0.328{\scriptsize $\pm$0.006} & 2.480{\scriptsize $\pm$0.026} \\
\midrule
\multirow{3}{*}{LlamaGen} & Default & 3.575{\scriptsize $\pm$0.023} & \textbf{3.101}{\scriptsize $\pm$0.044} & \textbf{0.296}{\scriptsize $\pm$0.001} & 0.526{\scriptsize $\pm$0.005} & 3.424{\scriptsize $\pm$0.026} \\
 & Dynamic Temperature & \textbf{3.610}{\scriptsize $\pm$0.023} & 3.082{\scriptsize $\pm$0.042} & 0.295{\scriptsize $\pm$0.001} & 0.541{\scriptsize $\pm$0.006} & 3.491{\scriptsize $\pm$0.026} \\
 & $p$-less cluster (Ours) & 3.485{\scriptsize $\pm$0.041} & 2.942{\scriptsize $\pm$0.089} & 0.290{\scriptsize $\pm$0.001} & \textbf{0.543}{\scriptsize $\pm$0.005} & \textbf{3.511}{\scriptsize $\pm$0.024} \\
\bottomrule
\end{tabular}
\end{table*}

\begin{table*}[ht]
\centering
\caption{Full results on the \textbf{PartiPrompts} benchmark comparing Default, Dynamic Temperature, and $p$-less cluster decoding strategies. Metrics follow the same protocol as Table~\ref{tab:geneval}. Bold indicates the best value per model per metric.}
\label{tab:parti}
\begin{tabular}{p{1.6cm} p{3.5cm} r r r r r}
\toprule
\multirow{2}{*}{Model} & \multirow{2}{*}{Decoding Strategy} & \multirow{2}{*}{Quality} & \multicolumn{2}{c}{Alignment} & \multicolumn{2}{c}{Diversity} \\
\cmidrule(lr){4-5}\cmidrule(lr){6-7}
 & & VLM & VLM & CLIPScore & DINOv2 & Vendi \\
\midrule
\multirow{3}{*}{Anole} & Default & 4.154{\scriptsize $\pm$0.018} & 3.428{\scriptsize $\pm$0.035} & 0.296{\scriptsize $\pm$0.001} & 0.535{\scriptsize $\pm$0.005} & 3.401{\scriptsize $\pm$0.024} \\
 & Dynamic Temperature & \textbf{4.320}{\scriptsize $\pm$0.025} & \textbf{3.563}{\scriptsize $\pm$0.051} & \textbf{0.297}{\scriptsize $\pm$0.002} & 0.531{\scriptsize $\pm$0.008} & 3.353{\scriptsize $\pm$0.037} \\
 & $p$-less cluster (Ours) & 3.816{\scriptsize $\pm$0.024} & 3.289{\scriptsize $\pm$0.049} & 0.290{\scriptsize $\pm$0.002} & \textbf{0.542}{\scriptsize $\pm$0.007} & \textbf{3.466}{\scriptsize $\pm$0.030} \\
\midrule
\multirow{3}{*}{Emu3} & Default & 4.684{\scriptsize $\pm$0.020} & 3.798{\scriptsize $\pm$0.049} & 0.295{\scriptsize $\pm$0.002} & 0.364{\scriptsize $\pm$0.007} & 2.618{\scriptsize $\pm$0.032} \\
 & Dynamic Temperature & \textbf{4.704}{\scriptsize $\pm$0.017} & \textbf{3.862}{\scriptsize $\pm$0.047} & \textbf{0.295}{\scriptsize $\pm$0.002} & 0.360{\scriptsize $\pm$0.007} & 2.611{\scriptsize $\pm$0.031} \\
 & $p$-less cluster (Ours) & 4.233{\scriptsize $\pm$0.027} & 3.303{\scriptsize $\pm$0.047} & 0.287{\scriptsize $\pm$0.002} & \textbf{0.400}{\scriptsize $\pm$0.007} & \textbf{2.794}{\scriptsize $\pm$0.031} \\
\midrule
\multirow{3}{*}{Janus Pro-7B} & Default & 3.959{\scriptsize $\pm$0.041} & 3.403{\scriptsize $\pm$0.056} & 0.288{\scriptsize $\pm$0.002} & 0.426{\scriptsize $\pm$0.007} & 2.947{\scriptsize $\pm$0.031} \\
 & Dynamic Temperature & \textbf{4.503}{\scriptsize $\pm$0.024} & \textbf{4.007}{\scriptsize $\pm$0.045} & \textbf{0.302}{\scriptsize $\pm$0.002} & 0.375{\scriptsize $\pm$0.007} & 2.707{\scriptsize $\pm$0.033} \\
 & $p$-less cluster (Ours) & 3.724{\scriptsize $\pm$0.044} & 3.179{\scriptsize $\pm$0.059} & 0.281{\scriptsize $\pm$0.002} & \textbf{0.454}{\scriptsize $\pm$0.007} & \textbf{3.080}{\scriptsize $\pm$0.031} \\
\midrule
\multirow{3}{*}{LlamaGen} & Default & 3.849{\scriptsize $\pm$0.024} & 3.421{\scriptsize $\pm$0.048} & 0.291{\scriptsize $\pm$0.002} & 0.474{\scriptsize $\pm$0.006} & 3.192{\scriptsize $\pm$0.027} \\
 & Dynamic Temperature & \textbf{3.913}{\scriptsize $\pm$0.025} & \textbf{3.479}{\scriptsize $\pm$0.052} & \textbf{0.293}{\scriptsize $\pm$0.002} & 0.471{\scriptsize $\pm$0.006} & 3.164{\scriptsize $\pm$0.030} \\
 & $p$-less cluster (Ours) & 3.831{\scriptsize $\pm$0.024} & 3.418{\scriptsize $\pm$0.048} & 0.289{\scriptsize $\pm$0.001} & \textbf{0.479}{\scriptsize $\pm$0.006} & \textbf{3.213}{\scriptsize $\pm$0.026} \\
\bottomrule
\end{tabular}
\end{table*}

\begin{table*}[ht]
\centering
\caption{Full results on the \textbf{GenEval} benchmark for Janus Pro-7B at low CFG (CFG = 1.0)}
\label{tab:low_cfg_janus_pro_geneval}
\begin{tabular}{p{1.6cm} p{3.5cm} r r r r r r }
\toprule
\multirow{2}{*}{Model} & \multirow{2}{*}{Decoding Strategy} & \multirow{2}{*}{Quality} & \multicolumn{2}{c}{Alignment} & \multicolumn{2}{c}{Diversity} \\
\cmidrule(lr){4-5}\cmidrule(lr){6-7}
  & & VLM & VLM & CLIPScore & DINOv2 & Vendi \\
\midrule
\multirow{3}{*}{Janus Pro-7B} & Default & 4.417{\scriptsize $\pm$0.020} & 3.987{\scriptsize $\pm$0.035} & \textbf{0.319}{\scriptsize $\pm$0.001} & 0.442{\scriptsize $\pm$0.008} & 2.977{\scriptsize $\pm$0.034} \\
 & Dynamic Temperature & \textbf{4.503}{\scriptsize $\pm$0.018} & \textbf{4.115}{\scriptsize $\pm$0.034} & 0.319{\scriptsize $\pm$0.003} & 0.426{\scriptsize $\pm$0.016} & 2.912{\scriptsize $\pm$0.069} \\
 & $p$-less cluster (Ours) & 3.804{\scriptsize $\pm$0.023} & 3.174{\scriptsize $\pm$0.042} & 0.295{\scriptsize $\pm$0.001} & \textbf{0.552}{\scriptsize $\pm$0.006} & \textbf{3.506}{\scriptsize $\pm$0.028} \\
\bottomrule
\end{tabular}
\end{table*}

\begin{table*}[ht]
\centering
\caption{Full results on the \textbf{PartiPrompts} benchmark for Janus Pro-7B at low CFG (CFG = 1.0)}
\label{tab:low_cfg_janus_pro_parti}
\begin{tabular}{p{1.6cm} p{3.5cm} r r r r r r }
\toprule
\multirow{2}{*}{Model} & \multirow{2}{*}{Decoding Strategy} & \multirow{2}{*}{Quality} & \multicolumn{2}{c}{Alignment} & \multicolumn{2}{c}{Diversity} \\
\cmidrule(lr){4-5}\cmidrule(lr){6-7}
  & & VLM & VLM & CLIPScore & DINOv2 & Vendi \\
\midrule
\multirow{3}{*}{Janus Pro-7B} & Default  & 4.271{\scriptsize $\pm$0.027} & 3.358{\scriptsize $\pm$0.056} & 0.276{\scriptsize $\pm$0.002} & 0.553{\scriptsize $\pm$0.010} & 3.489{\scriptsize $\pm$0.045} \\
 & Dynamic Temperature & \textbf{4.379}{\scriptsize $\pm$0.035} & \textbf{3.376}{\scriptsize $\pm$0.074} & \textbf{0.295}{\scriptsize $\pm$0.003} & 0.476{\scriptsize $\pm$0.013} & 3.142{\scriptsize $\pm$0.059} \\
 & $p$-less cluster (Ours) & 4.053{\scriptsize $\pm$0.029} & 3.130{\scriptsize $\pm$0.056} & 0.273{\scriptsize $\pm$0.002} & \textbf{0.589}{\scriptsize $\pm$0.010} & \textbf{3.646}{\scriptsize $\pm$0.042} \\
\bottomrule
\end{tabular}
\end{table*}

\subsection{Diversity breakdownby PartiPrompts challenge category (all models)}
\label{sec:appendix_challenge_parti}

Figure~\ref{fig:challenge_parti_apx} extends Figure~\ref{fig:challenge_parti} in the main text by showing DINOv2 diversity broken down by PartiPrompts challenge category for all four AR models.

\begin{figure*}[ht]
  \centering
  \begin{subfigure}[t]{0.48\linewidth}
    \includegraphics[width=\linewidth]{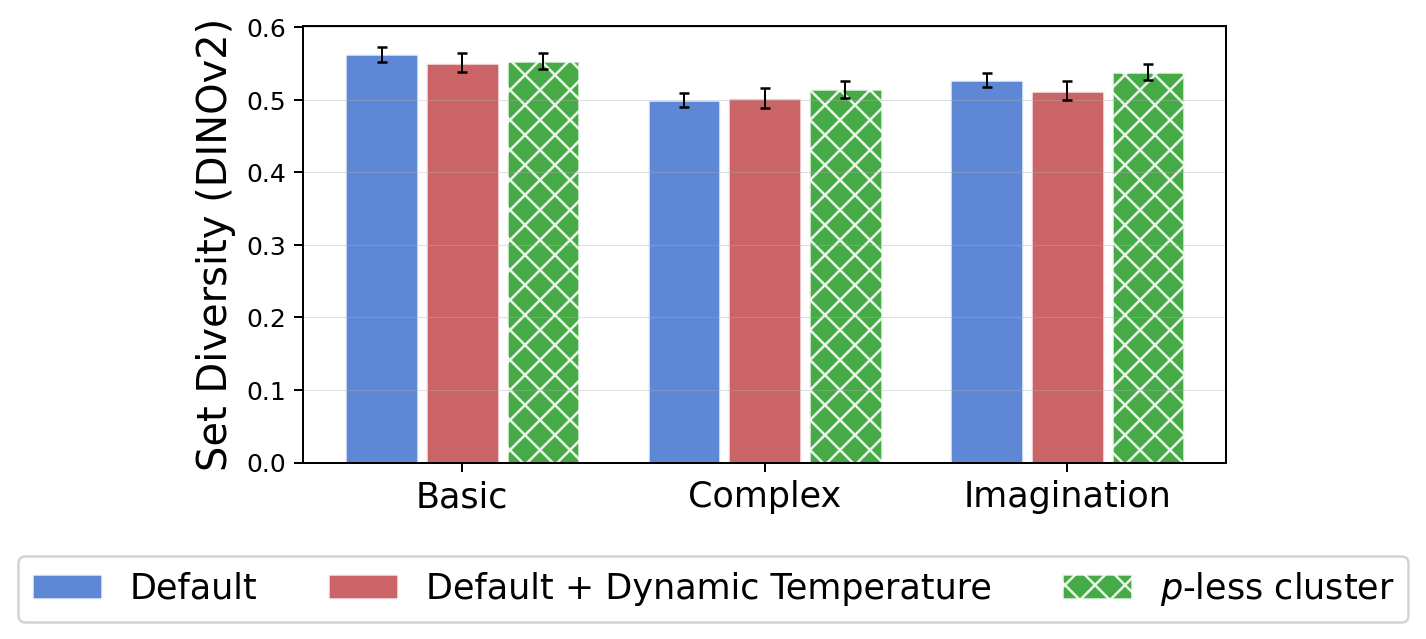}
    \caption{Anole}
  \end{subfigure}
  \hfill
  \begin{subfigure}[t]{0.48\linewidth}
    \includegraphics[width=\linewidth]{figures/paper_table4_challenge_full_parti_emu3_dino.png}
    \caption{Emu3}
  \end{subfigure}
  \vspace{0.5em}
  \begin{subfigure}[t]{0.48\linewidth}
    \includegraphics[width=\linewidth]{figures/paper_table4_challenge_full_parti_janus_dino.png}
    \caption{Janus Pro}
  \end{subfigure}
  \hfill
  \begin{subfigure}[t]{0.48\linewidth}
    \includegraphics[width=\linewidth]{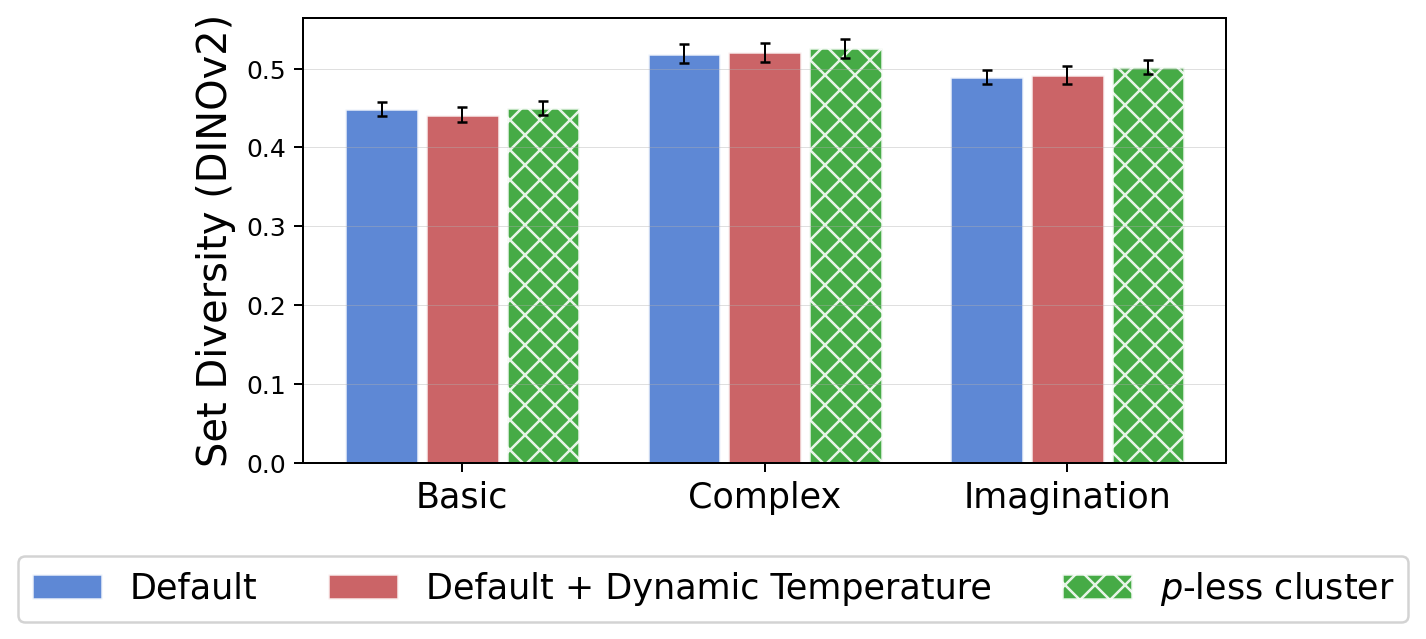}
    \caption{LlamaGen}
  \end{subfigure}
  \caption{\textbf{DINOv2 diversity by PartiPrompts challenge category across all four AR models.} $p$-less cluster consistently achieves the highest diversity in the \emph{Complex} and \emph{Imagination} categories across all models. For Anole and LlamaGen, the gains extend uniformly across all categories, whereas for Emu3 and Janus Pro the improvements are more concentrated in open-ended prompts.}
  \label{fig:challenge_parti_apx}
\end{figure*}

\subsection{Additional diversity-quality tradeoff plots}
\label{sec:appendix_tradeoff_plots}

\begin{figure}[t]
  \centering
  \includegraphics[width=\linewidth]{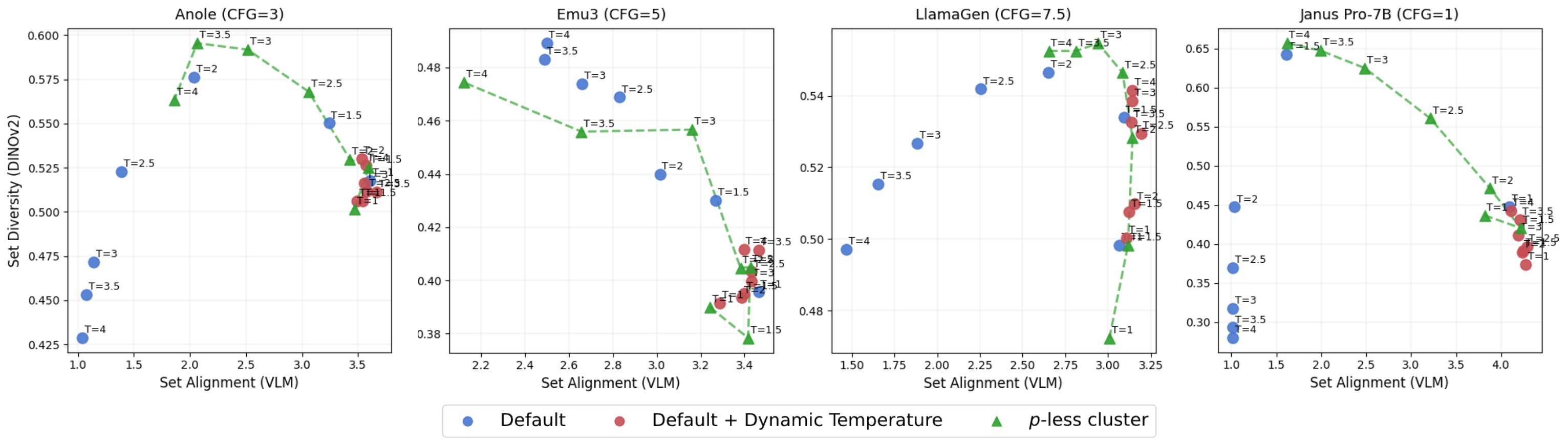}
  \caption{\textbf{Diversity--alignment tradeoff across decoding strategies for all 4 models using GenEval Dataset (DINOv2 vs.\ Prompt alignment) .} Janus Pro-7B uses low CFG (CFG$=$1.0) due to its low diversity under default CFG (Section \ref{sec:main_results}). Our $p$-less cluster method (triangles) shifts the Pareto frontier toward higher diversity while remaining competitive on alignment (except for Emu3).}
  \label{fig:apx_diversity_quality_tradeoff}
\end{figure}

\clearpage
\section{VLM-as-judge prompt and protocol}
\label{sec:appendix_vlm_judge}

The following prompt was used for all VLM-as-judge evaluations. The model is instructed to act as an expert image quality evaluator, scoring each generated image on both visual quality and prompt alignment. The protocol ensures consistency and transparency in the evaluation process.

\begin{tcolorbox}[
  title={Listing A.1: VLM-as-Judge Evaluation Prompt},
  fonttitle=\small\bfseries,
  colback=white,
  colframe=black!50,
  colbacktitle=black!75,
  coltitle=white,
  boxrule=0.4pt,
  enhanced,
  before upper={\ttfamily\small\setlength{\parskip}{0.2em}}
]
You are an expert image quality evaluator. You will assess a generated image against its source prompt with precision and consistency.

\textbf{Source Prompt}

"\{prompt\_text\}"

\textbf{Evaluation criteria}

\textbf{1. Visual quality (1--5)}
Assess the intrinsic quality of the image, independent of the prompt:
\begin{itemize}\setlength{\itemsep}{0pt}
  \item 5 -- Excellent: Sharp, well-lit, natural colors, zero artifacts, professional-grade composition
  \item 4 -- Good: Minor imperfections (slight blur, minor noise) that don't detract from the image
  \item 3 -- Acceptable: Noticeable issues (soft focus, unnatural lighting, minor distortions) but still coherent
  \item 2 -- Poor: Significant artifacts, distortions, or incoherence that impairs the image
  \item 1 -- Very Poor: Severely corrupted, unrecognizable, or fundamentally broken
\end{itemize}

\textbf{2. Prompt alignment (1--5)}
Assess how faithfully the image represents the prompt. Consider: subject, attributes (color, size, texture), spatial relationships, style, and mood:
\begin{itemize}\setlength{\itemsep}{0pt}
  \item 5 -- Perfect: All prompt elements present and accurately depicted
  \item 4 -- Good: Core elements present; minor attributes missing or slightly off
  \item 3 -- Partial: Main subject recognizable but several attributes or details are wrong/missing
  \item 2 -- Poor: Only loosely related to the prompt; major elements absent or incorrect
  \item 1 -- No match: Image bears no meaningful relation to the prompt
\end{itemize}

\textbf{Scoring rules}
\begin{itemize}\setlength{\itemsep}{0pt}
  \item Score each dimension \textbf{independently} --- a technically beautiful image can still score 1 on alignment
  \item Be \textbf{consistent}: the same image shown twice should receive the same scores
  \item Avoid \textbf{leniency bias}: reserve 5s for images that genuinely meet all criteria
  \item If the prompt is ambiguous, judge alignment against the most reasonable interpretation
\end{itemize}

\textbf{Output format}
Return ONLY a valid JSON object. No preamble, no trailing text.

\{

\quad "quality\_score": <int 1-5>,

\quad "alignment\_score": <int 1-5>,

\quad "quality\_reasoning": "<one sentence citing specific visual evidence>",

\quad "alignment\_reasoning": "<one sentence citing prompt elements present or missing>"

\}
\end{tcolorbox}

\section{Qualitative examples}
\label{sec:appendix_qualitative}

Figures~\ref{fig:qualitative_emu3}--\ref{fig:appendix_llamagen} show qualitative comparisons for all 4 models on GenEval and PartiPrompts.

\begin{figure*}[ht]
    \centering
    \includegraphics[width=\linewidth]{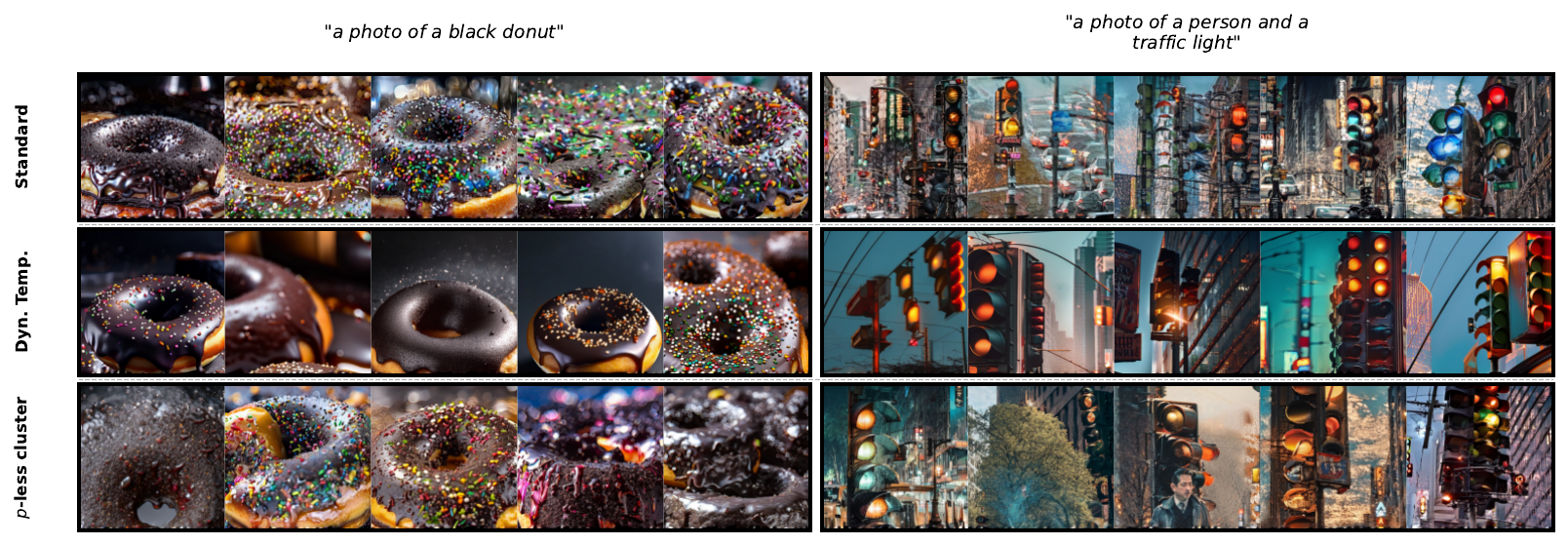}
    \vspace{0.2em}
    \includegraphics[width=\linewidth]{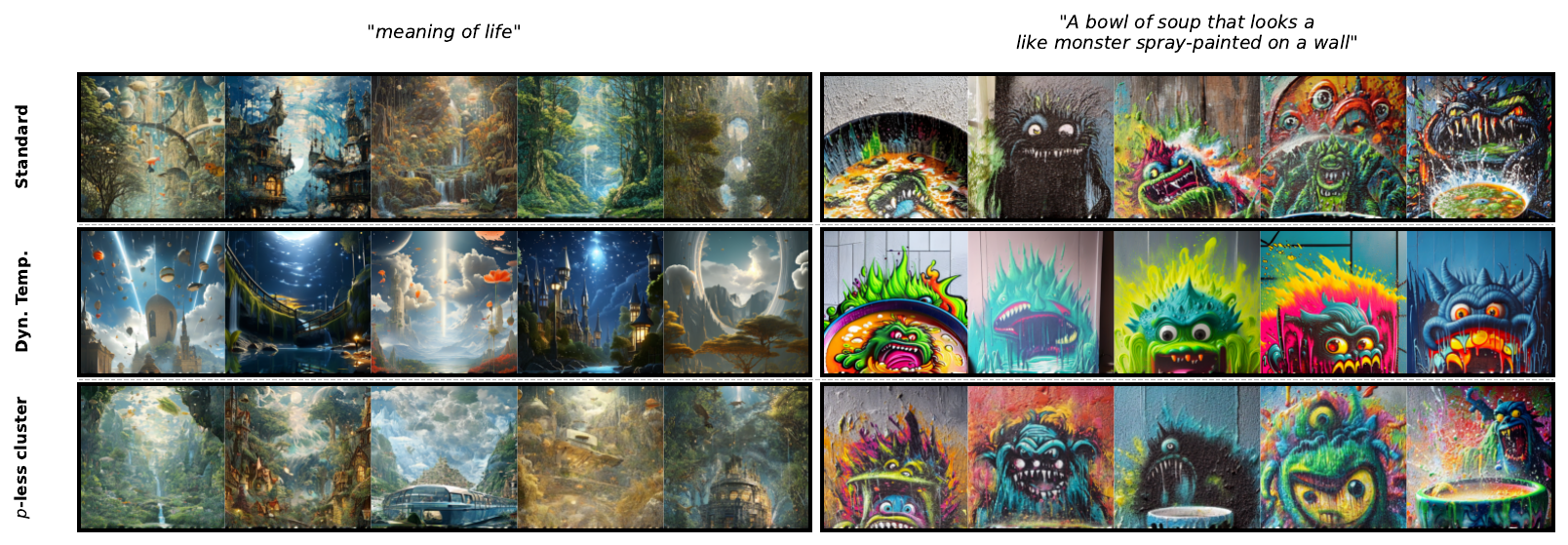}
    \caption{\textbf{Qualitative comparison for Emu3.} Hyperparameters match Table~\ref{tab:main_results}. Top: GenEval prompts; Bottom: PartiPrompts. Default and Dynamic Temperature produce visually similar samples for the same prompt, reflecting diversity collapse. $p$-less cluster method generates samples with greater variation in object position, color, and scene composition while preserving prompt fidelity. Additional qualitative examples for all models are provided in Appendix~\ref{sec:appendix_qualitative}.}
    \label{fig:qualitative_emu3}
\end{figure*}

\begin{figure}[ht]
    \centering
    \includegraphics[width=\linewidth]{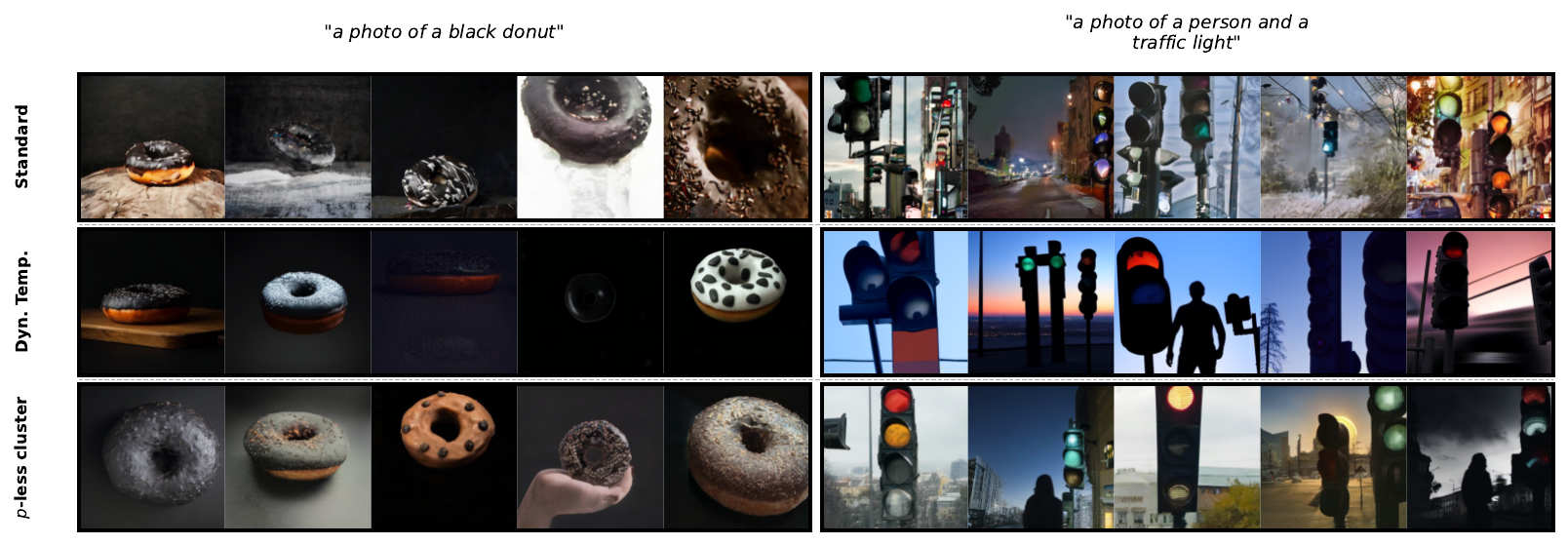}
    \vspace{0.5em}
    \includegraphics[width=\linewidth]{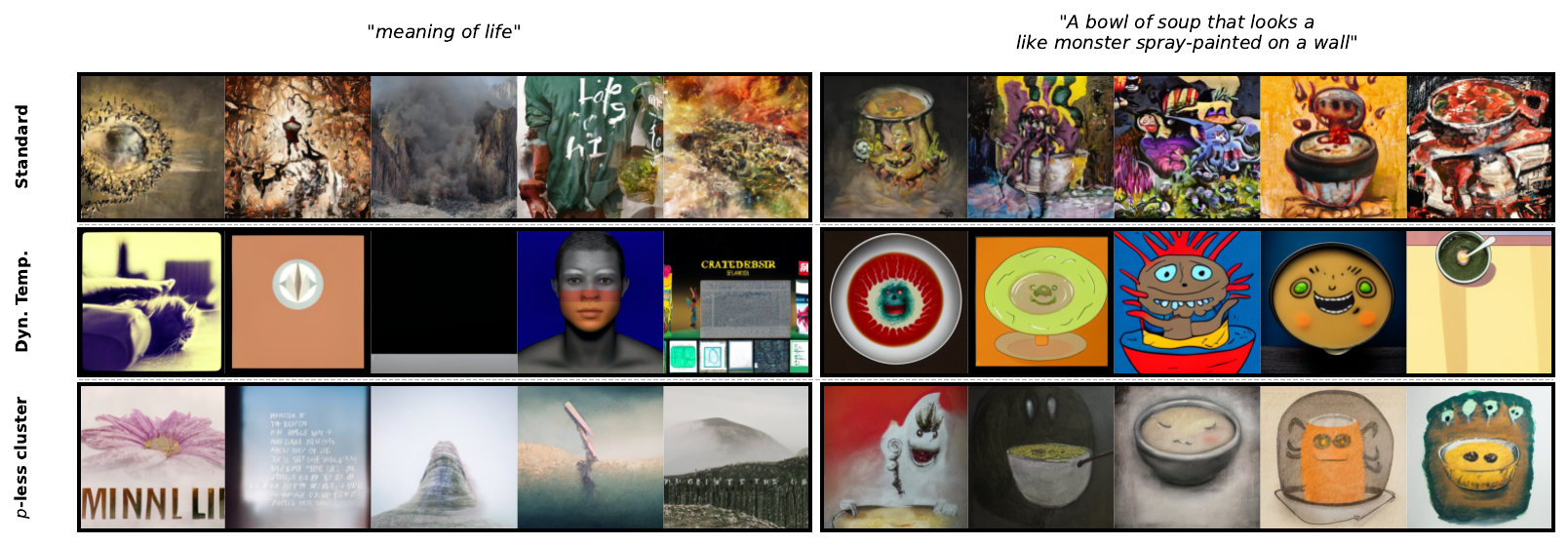}
    \caption{\textbf{Qualitative comparison for Anole.} Top: GenEval prompts; Bottom: PartiPrompts. Hyperparameters match Table~\ref{tab:geneval}.}
    \label{fig:appendix_anole}
\end{figure}

\begin{figure}[ht]
    \centering
    \includegraphics[width=\linewidth]{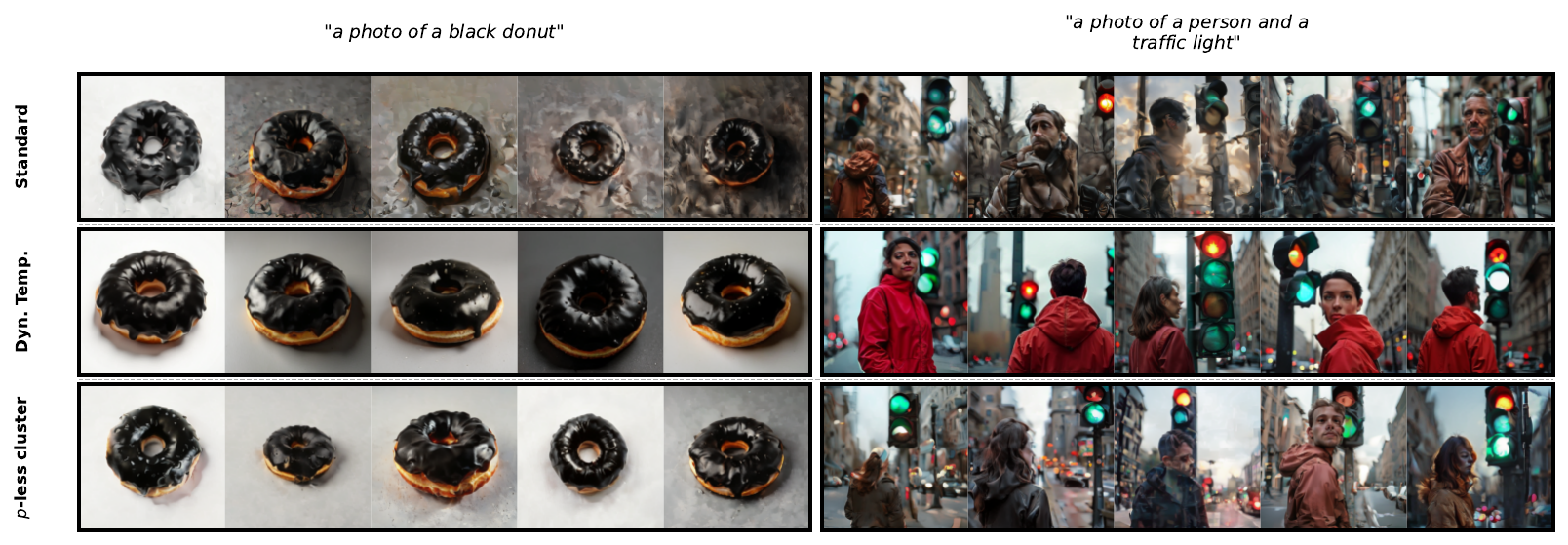}
    \vspace{0.5em}
    \includegraphics[width=\linewidth]{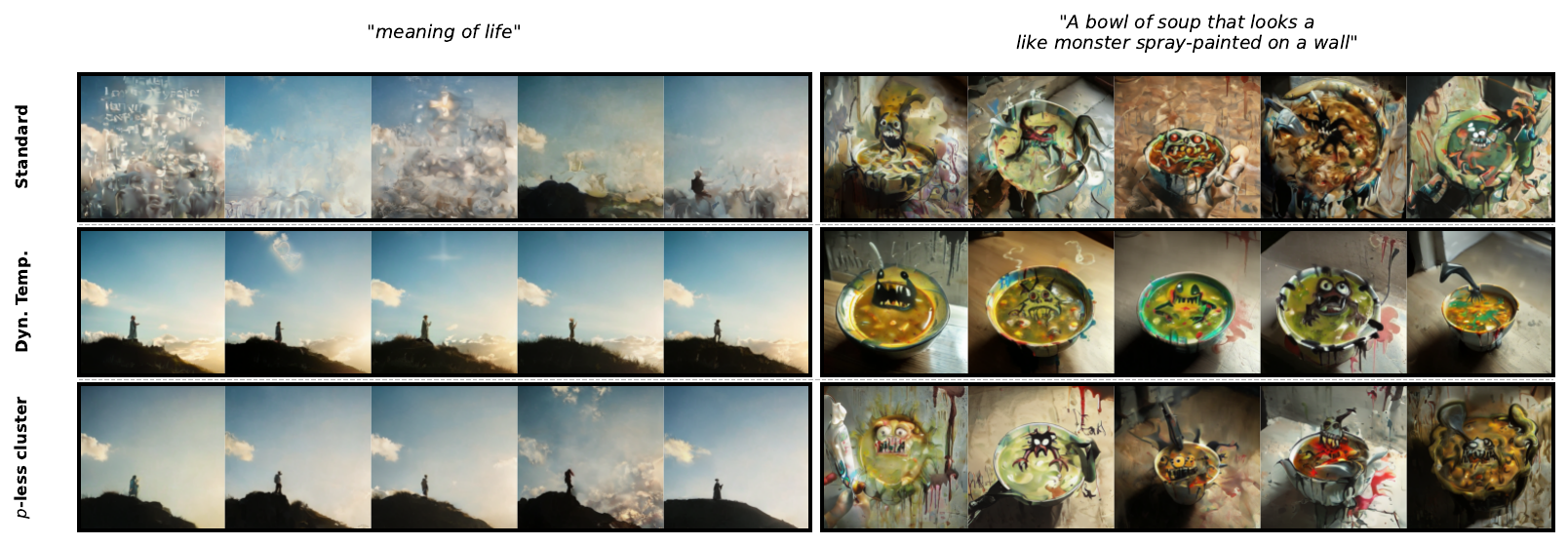}
    \caption{\textbf{Qualitative comparison for Janus} at default CFG. Top: GenEval prompts; Bottom: PartiPrompts. Hyperparameters match Table~\ref{tab:geneval}.}
    \label{fig:appendix_janus}
\end{figure}

\begin{figure}[ht]
    \centering
    \includegraphics[width=\linewidth]{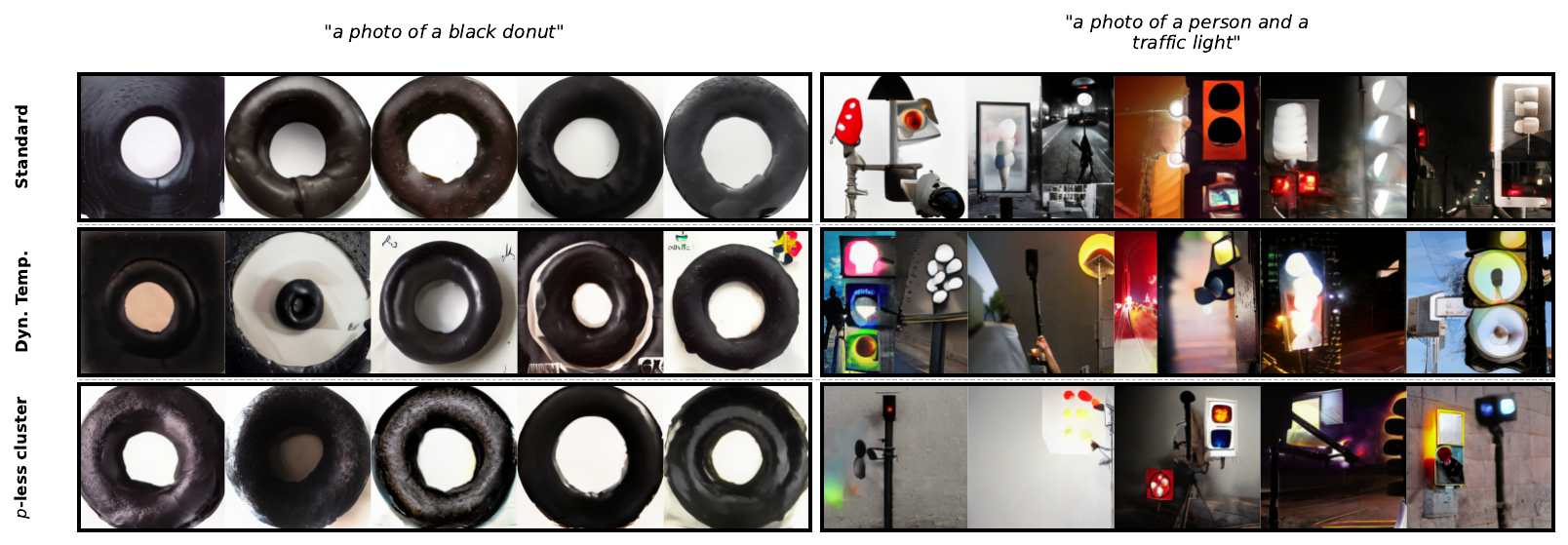}
    \vspace{0.5em}
    \includegraphics[width=\linewidth]{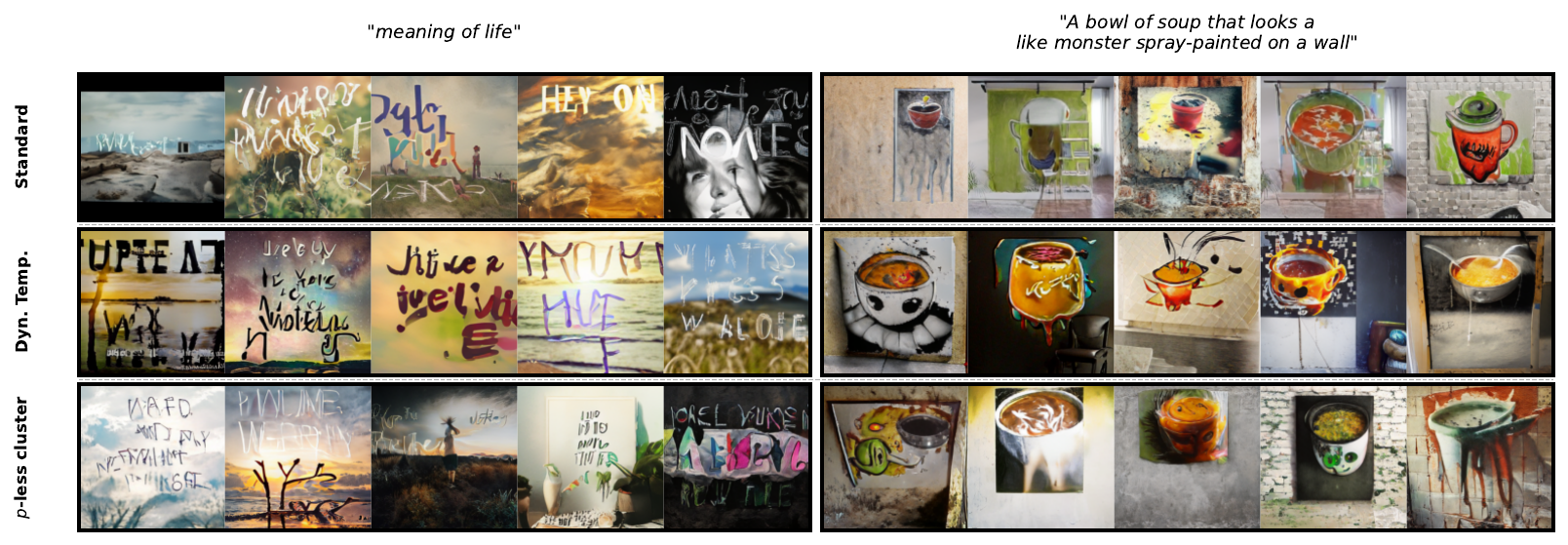}
    \caption{\textbf{Qualitative comparison for LLaMAGen.} Top: GenEval prompts; Bottom: PartiPrompts. Hyperparameters match Table~\ref{tab:geneval}.}
    \label{fig:appendix_llamagen}
\end{figure}

\clearpage

\section{Additional ablation studies}

Table~\ref{tab:p_less_comparison} compares $p$-less and $p$-less cluster at the default CFG value across all four models and both benchmarks. $p$-less cluster improves Vendi diversity in the large majority of settings, with the most substantial gains on Janus Pro-7B (+16.54\% on GenEval, +10.12\% on PartiPrompts), while gains on Anole and LlamaGen, the two under-trained models, are modest and occasionally slightly negative.

\begin{table*}[h]
\centering
\caption{Vendi diversity score comparison of $p$-less vs.\ $p$-less cluster across models and benchmarks at default CFG. $p$-less cluster yields diversity gain compared to $p$-less in most model-dataset settings with the most significant gains in Janus Pro-7B. }
\label{tab:p_less_comparison}
\begin{tabular}{ll rrr}
\toprule
Model & Dataset & $p$-less & $p$-less cluster & \% Gain / Loss \\
\midrule
\multirow{2}{*}{Anole}
  & GenEval & 3.578 & 3.549 & \textcolor{red}{$-$0.81\%} \\
  & PartiPrompt   & 3.443 & 3.466 & \textcolor{teal}{$+$0.67\%} \\
\midrule
\multirow{2}{*}{Emu3}
  & GenEval & 2.884 & 2.973 & \textcolor{teal}{$+$3.09\%} \\
  & PartiPrompt   & 2.779 & 2.794 & \textcolor{teal}{$+$0.54\%} \\
\midrule
\multirow{2}{*}{Janus Pro-7B}
  & GenEval & 2.128 & 2.480 & \textcolor{teal}{$+$16.54\%} \\
  & PartiPrompt   & 2.797 & 3.080 & \textcolor{teal}{$+$10.12\%} \\
\midrule
\multirow{2}{*}{LlamaGen}
  & GenEval & 3.441 & 3.511 & \textcolor{teal}{$+$2.03\%} \\
  & PartiPrompt   & 3.230 & 3.216 & \textcolor{red}{$-$0.43\%} \\
\bottomrule
\end{tabular}
\end{table*}

\section{Theoretical justification}
\newtcbtheorem[number within=section]{myprop}{Proposition}%
{
    enhanced,
    breakable,
    colback=white,
    colframe=blue!70!black,
    coltitle=white,
    fonttitle=\bfseries,
    colbacktitle=blue!70!black,
    attach boxed title to top left=
    {xshift=0.5em,yshift=-2mm},
    boxed title style=
    {
        sharp corners,
        rounded corners=northwest,
        rounded corners=northeast,
        boxrule=0pt,
    },
    sharp corners,
    boxrule=0.8pt,
    left=8pt,
    right=8pt,
    top=4pt,
    bottom=4pt,
}{prop}

Token-level sampling treats each token independently, even though autoregressive image models often spread probability mass across many visually similar tokens within a cluster. As a result, token-level filtering can remove tokens that collectively represent a meaningful generation mode, reducing diversity. Cluster-level $p$-less instead operates on aggregate cluster probabilities, preserving all tokens from clusters that survive truncation. The tokens most affected by this difference are those filtered at the token level but potentially retained at the cluster level, which we term \textit{High-Support Low-Probability} (HSLP) tokens (\eqref{eq:hslp1}, \eqref{eq:hslp2}). We show empirically that HSLP tokens are common in practice and prove that cluster-level $p$-less assigns them strictly higher probability than token-level $p$-less.
\begin{figure*}[h]
    \centering
    \includegraphics[width=0.65\linewidth]{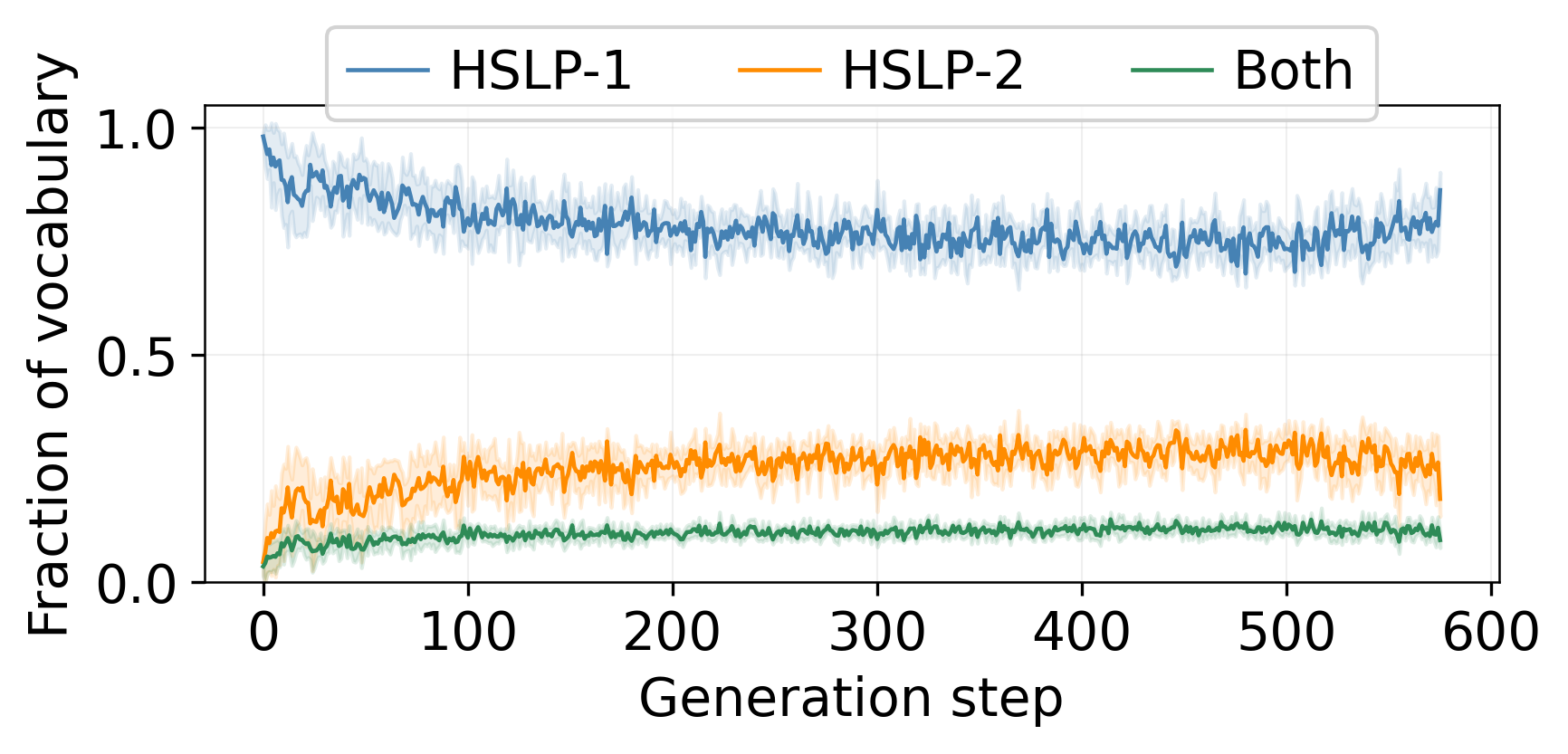}
    \caption{\textbf{Fraction of tokens satisfying the HSLP conditions (\eqref{eq:hslp1}, \eqref{eq:hslp2}) for Janus Pro-7B across decoding steps.} A reasonable portion of tokens (approximately 10\%) satisfy both conditions, indicating that they are filtered by token-level $p$-less but would survive cluster-level $p$-less, thus receiving a non-zero probability under our method.}
    \label{fig:janus_hslp_ratio}
\end{figure*}
\subsection{Empirical prevalence of HSLP tokens}

Figure~\ref{fig:janus_hslp_ratio} shows the fraction of tokens satisfying the HSLP conditions for Janus Pro-7B across decoding steps. Approximately 10\% of tokens at each step are filtered out by token-level $p$-less yet belong to a cluster that survives cluster-level filtering. This motivates the formal analysis in Proposition H.1 below: such tokens receive strictly higher probability under cluster-level $p$-less, and recovering them contributes to improved diversity without sacrificing quality.

\subsection{Theoretical analysis of token admission}

\begin{myprop}{HSLP Token Admission}{hslp}

\noindent\textbf{Assumptions and Notation.}
Let $V$ denote the token vocabulary, partitioned into $N$ clusters $\{C_1, C_2, \ldots, C_N\}$. Let $P_{\text{model}}$ denote the autoregressive model distribution over tokens, and define the induced cluster distribution as
\[
P_{\text{cluster}}(C_n) = \sum_{x_i \in C_n} P_{\text{model}}(x_i).
\]
Let $x_c \in C_n$ be a token such that $P_{\text{model}}(x_c) > 0$ and satisfying the following \textit{High-Support Low-Probability} (HSLP) conditions:
\begin{align}
    P_{\text{model}}(x_c) &< L[P_{\text{model}}]
    = \sum_{i=1}^{|V|} P_{\text{model}}(x_i)^2
    \tag{HSLP-1}
    \label{eq:hslp1} \\
    P_{\text{cluster}}(C_n) &\geq
    L[P_\text{cluster}] =
    \sum_{i=1}^{N} P_\text{cluster}(C_i)^2
    \tag{HSLP-2}
    \label{eq:hslp2}
\end{align}

Define the surviving cluster set as
\begin{equation}
    \mathcal{C}_{p\text{-less-cluster}}
    =
    \left\{
    C_k :
    P_{\text{cluster}}(C_k)
    \geq
    L[P_\text{cluster}]
    \right\},
\end{equation}
where
$L[P_\text{cluster}]$
denotes the cluster-level collision probability.
\\ 

Let $\tilde{P}$ denote normalization of a non-negative measure to sum to unity.

\textbf{Claim.}
Under the two HSLP conditions,
\begin{equation}
    \tilde{P}_{p\text{-less}}(x_c)
    <
    P_{\text{model}}(x_c)
    <
    \tilde{P}_{p\text{-less-cluster}}(x_c).
\end{equation}
\end{myprop}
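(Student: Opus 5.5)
The plan is to compute both normalized probabilities of $x_c$ in closed form and compare each one with $P_{\text{model}}(x_c)$. Throughout I take the $p$-less threshold to be the collision probability, i.e.\ the Rényi-$2$ case $\hat p_2 = e^{-H_2} = \sum_i p_i^2$, which is how $L[\cdot]$ is defined in the statement.

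\emph{Left inequality.} Token-level $p$-less keeps exactly the tokens with $P_{\text{model}}(x_i)\ge L[P_{\text{model}}]$ and then renormalizes. By \eqref{eq:hslp1}, $x_c$ is masked, so $\tilde P_{p\text{-less}}(x_c)=0$. Since $P_{\text{model}}(x_c)>0$ by assumption, this gives $0<P_{\text{model}}(x_c)$. The only side fact to record is that the kept set is nonempty, so the renormalization is well defined. This holds because $L[P_{\text{model}}]=\mathbb{E}_{x\sim P_{\text{model}}}[P_{\text{model}}(x)]\le \max_i P_{\text{model}}(x_i)$, so the top token always survives.

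\emph{Right inequality.} I would first write down the effective token law of the two-stage sampler. Let $Z=\sum_{C_k\in\mathcal{C}_{p\text{-less-cluster}}}P_{\text{cluster}}(C_k)$. For a token $x$ in a surviving cluster $C_n$, the probability of drawing it is the cluster probability $P_{\text{cluster}}(C_n)/Z$ times the within-cluster probability $P_{\text{model}}(x)/P_{\text{cluster}}(C_n)$. The cluster mass cancels, so
\[
\tilde P_{p\text{-less-cluster}}(x)=\frac{P_{\text{model}}(x)}{Z}.
\]
Condition \eqref{eq:hslp2} says exactly that $C_n\in\mathcal{C}_{p\text{-less-cluster}}$, so this formula applies to $x_c$. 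Clearly $Z\le 1$, and $Z>0$ by the same mean-versus-max argument applied to the cluster distribution. The claim therefore reduces to showing $Z<1$, i.e.\ that some cluster of positive mass is removed.

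\emph{Main obstacle: the strictness $Z<1$.} My approach uses the identity $L[P_{\text{cluster}}]=\mathbb{E}_{C\sim P_{\text{cluster}}}[P_{\text{cluster}}(C)]$: the threshold is the $P_{\text{cluster}}$-weighted mean of the positive cluster masses. If those masses are not all equal, the smallest positive one lies strictly below this mean. That cluster is then removed while carrying positive mass, which gives $Z<1$ and hence $P_{\text{model}}(x_c)<\tilde P_{p\text{-less-cluster}}(x_c)$. In the degenerate case where $P_{\text{cluster}}$ is uniform on its support, nothing is removed, $Z=1$, and only equality holds. So the proof must either add the mild non-degeneracy hypothesis that $P_{\text{cluster}}$ is not uniform on its support, or state the right inequality as $\le$ with strictness under that hypothesis. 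I would try first to derive non-uniformity from \eqref{eq:hslp1}; I expect this to fail in general, because token-level non-uniformity does not force cluster-level non-uniformity. I would therefore state the non-degeneracy hypothesis explicitly and note that it holds generically in practice.
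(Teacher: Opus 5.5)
Your proposal is correct and matches the paper's proof essentially step for step: $x_c$ is masked at the token level by (HSLP-1), and the two-stage cancellation gives $\tilde P_{p\text{-less-cluster}}(x_c)=P_{\text{model}}(x_c)/Z$, so everything reduces to showing $Z<1$. The paper also obtains $Z<1$ only by appealing to non-uniformity of the cluster distribution (it cites the $p$-less paper and concedes the uniform case $Z=1$ in its remark), so your explicit non-degeneracy hypothesis is exactly the assumption the statement leaves implicit; your condition ``non-uniform on its support'' is slightly sharper than the paper's, since removing only zero-mass clusters would not make $Z<1$.
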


\begin{proof}

\textbf{Part 1: $\tilde{P}_{p\text{-less}}(x_c)=0$.}
By \eqref{eq:hslp1}, $x_c$ falls strictly below the token-level Rényi-2 threshold and is filtered out by token-level $p$-less. After normalization,
\begin{equation}
    \tilde{P}_{p\text{-less}}(x_c)
    =
    0
    <
    P_{\text{model}}(x_c),
    \label{eq:part1}
\end{equation}
where the strict inequality follows from the assumption
$P_{\text{model}}(x_c) > 0$.

\paragraph{Part 2: $\tilde{P}_{p\text{-less-cluster}}(x_c) > P_{\text{model}}(x_c)$.}
Under cluster-level sampling,
\begin{equation}
    \tilde{P}_{p\text{-less-cluster}}(x_c)
    =
    \tilde{P}_{p\text{-less}}(C_n)
    \cdot
    P(x_c \mid C_n),
    \label{eq:twostage}
\end{equation}
where $\tilde{P}_{p\text{-less}}(C_n)$ denotes the normalized cluster probability after filtering, and $P(x_c \mid C_n)$ is the within-cluster conditional probability of $x_c$.

Expanding \eqref{eq:twostage},
\begin{equation}
    \tilde{P}_{p\text{-less}}(C_n)
    =
    \frac{
    P_{\text{cluster}}(C_n)
    }{
    \displaystyle
    \sum_{C_k \in \mathcal{C}_{p\text{-less-cluster}}}
    P_{\text{cluster}}(C_k)
    },
    \qquad
    P(x_c \mid C_n)
    =
    \frac{
    P_{\text{model}}(x_c)
    }{
    P_{\text{cluster}}(C_n)
    }.
    \label{eq:expand}
\end{equation}

Substituting \eqref{eq:expand} into \eqref{eq:twostage} gives
\begin{equation}
    \tilde{P}_{p\text{-less-cluster}}(x_c)
    =
    \frac{
    P_{\text{model}}(x_c)
    }{
    \displaystyle
    \sum_{C_k \in \mathcal{C}_{p\text{-less-cluster}}}
    P_{\text{cluster}}(C_k)
    }.
    \label{eq:cancel}
\end{equation}

Define
\begin{equation}
    0 < Z
    =
    \sum_{C_k \in \mathcal{C}_{p\text{-less-cluster}}}
    P_{\text{cluster}}(C_k)
    <
    1,
    \label{eq:Z}
\end{equation}
where the strict inequality follows because at least one cluster is filtered out under non-uniform cluster distributions (shown in~\cite{pless}).

Therefore,
\begin{equation}
    \tilde{P}_{p\text{-less-cluster}}(x_c)
    =
    \frac{
    P_{\text{model}}(x_c)
    }{
    Z
    }
    >
    P_{\text{model}}(x_c).
    \label{eq:part2}
\end{equation}

\paragraph{Conclusion.}
Combining \eqref{eq:part1} and \eqref{eq:part2},
\begin{equation}
    \tilde{P}_{p\text{-less}}(x_c)
    =
    0
    <
    P_{\text{model}}(x_c)
    <
    \tilde{P}_{p\text{-less-cluster}}(x_c)
    =
    \frac{
    P_{\text{model}}(x_c)
    }{
    Z
    }.
\end{equation}

\end{proof}

\begin{remark}
In the degenerate case where all clusters survive, i.e. under uniform cluster distribution, $Z=1$ and cluster-level $p$-less reduces to multinomial sampling.
\end{remark}

\clearpage

\end{document}